\documentclass[11pt]{article}
\usepackage[margin=1in]{geometry} 

\usepackage{amsmath,amssymb,amsthm}
\usepackage{graphicx}
\usepackage{booktabs}
\usepackage{hyperref}
\usepackage{enumitem}
\usepackage{xcolor}
\usepackage{microtype}
\usepackage{algorithm}
\usepackage{algorithmic}
 \usepackage{natbib}
\usepackage{multirow}
\usepackage{float}
\usepackage{placeins}

\newtheorem{proposition}{Proposition}
\newtheorem{theorem}{Theorem}
\newtheorem{corollary}{Corollary}
\newtheorem{remark}{Remark}
\newtheorem{definition}{Definition}
\newtheorem{lemma}{Lemma}

\title{Proactive Routing to Interpretable Surrogates with Distribution-Free Safety Guarantees}

\author{
  Iqtedar Uddin \\
  Illinois Institute of Technology \\
  \texttt{iuddin1@hawk.illinoistech.edu}
  \and
  Mazin Khider \\
  Illinois Institute of Technology \\
  \texttt{mkhider@hawk.illinoistech.edu}
  \and
  Andr\'e Bauer \\
  Illinois Institute of Technology \\
  \texttt{abauer7@illinoistech.edu}
}
\date{}

\begin{document}
\maketitle

\begin{abstract}
Model routing determines whether to use an accurate black-box model 
or a simpler surrogate that approximates it at lower cost or greater 
interpretability. In deployment settings, practitioners often wish to 
restrict surrogate use to inputs where its degradation relative to a 
reference model is controlled. We study proactive (input-based) routing, 
in which a lightweight gate selects the model before either runs, 
enabling distribution-free control of the fraction of routed inputs 
whose degradation exceeds a tolerance $\tau$.
The gate is trained to distinguish safe from unsafe inputs, and a 
routing threshold is chosen via Clopper--Pearson conformal calibration 
on a held-out set, guaranteeing that the routed-set violation rate is 
at most $\alpha$ with probability $1-\delta$. We derive a feasibility 
condition linking safe routing to the base safe rate $\pi$ and risk 
budget $\alpha$, along with sufficient AUC thresholds ensuring that 
feasible routing exists. Across 35 OpenML datasets and multiple 
black-box model families, gate-based conformal routing maintains 
controlled violation while achieving substantially higher coverage 
than regression conformal and naive baselines. We further show that 
probabilistic calibration primarily affects routing efficiency rather 
than distribution-free validity.
\end{abstract}

\section{Introduction}\label{sec:intro}

Practitioners increasingly face a deployment dilemma: interpretable 
models such as shallow decision trees and linear regressors are 
auditable, fast, and regulatorily favored, but are rejected in favor 
of black-box ensembles because no formal mechanism exists to establish 
when they are accurate enough to use. The intuitive 
solution---deploy the interpretable model when it agrees with the 
black-box, i.e., when $|f(x) - g(x)| \leq \tau$---requires running 
the expensive model on every input, defeating the purpose. We propose 
a proactive solution: identify safe inputs from features alone, before 
either model runs, with formal guarantees on the fraction of routed 
inputs where quality degrades unacceptably. When the interpretable surrogate performs acceptably on a given input, using it saves substantial compute with negligible quality loss. Local explanation methods such as LIME~\citep{ribeiro2016why} and SHAP~\citep{lundberg2017unified} construct surrogates to approximate black-box behavior. Our framework shows when such surrogates can safely replace the black-box rather than merely explain it, addressing the practical tension between black-box accuracy and interpretable deployment emphasized by Rudin~\citep{rudin2019stop}.

Model routing formalizes this choice: given an expensive model $f$ and a cheap surrogate $g$, decide per-input $x$ which to use. Existing approaches include cascading \citep{viola2001rapid, trapeznikov2013supervised, chen2023frugalgpt}, where the cheap model runs first and escalates to the expensive model if uncertain, and disagreement routing \citep{ribeiro2016why}, where the surrogate is used when $|f(x) - g(x)| \leq \tau$. Both approaches are output-dependent. They require running at least one model before deciding. Cascading must run $g$ on every input. Disagreement routing requires $f(x)$, largely defeating the purpose of cost savings. Furthermore, while some frameworks utilize input-based routing to dynamically skip computation for efficiency \citep{wang2018skipnet, wu2018blockdrop}, they rely on empirical heuristics and do not provide formal guarantees on the quality of the routed predictions.

We propose proactive routing with distribution-free safety guarantees. A lightweight classifier we call the \textit{gate} maps input features to a safety score $s: \mathcal{X} \to [0, 1]$ before either model runs. Unlike cascading, where the cheap model must process every input before deciding to escalate, and unlike disagreement routing, where the expensive model's output $f(x)$ is needed, our gate examines only the input $x$ and makes an immediate routing decision.

The gate is trained to distinguish inputs where the surrogate performs acceptably (degradation $d(x) \leq \tau$) from those where it does not.
A conformal calibration procedure then selects a routing threshold $t^*$ from a held-out calibration set, providing a PAC-style guarantee:
\[
  \Pr\!\Big(\Pr(d(X) > \tau \mid s(X) \geq t^*) > \alpha\Big) \leq \delta.
\]
This guarantee is distribution-free, requiring only exchangeability between calibration and test data. It holds regardless of the gate's calibration quality (ECE)~\citep{guo2017calibration}, model family, or data distribution.

\paragraph{Contributions.}
We formalize proactive surrogate routing and derive an exact 
finite-sample bound on routed-set violation via Clopper--Pearson 
conformal calibration. We characterize routing feasibility through 
a likelihood-ratio condition and sufficient AUC thresholds, and 
establish a tight lower bound on achievable coverage under concavity. 
We show that distribution-free validity is independent of probabilistic 
calibration, though calibration affects efficiency. Empirically, across 
diverse tabular benchmarks (LCBench and OpenML), gate conformal routing 
achieves higher coverage with controlled violations compared to 
regression-based baselines.

\FloatBarrier
\section{Related Work}\label{sec:related}

\paragraph{Conformal prediction and risk control.}
Split conformal prediction~\citep{vovk2005algorithmic} constructs distribution-free prediction sets with finite-sample coverage guarantees. Conformal Risk Control~\citep{bates2021distribution} guarantees $\mathbb{E}[L(\lambda)] \leq \alpha$, controlling expected loss. Our framework provides a stronger high-probability guarantee: $\Pr(V(t^*) > \alpha) \leq \delta$, bounding the probability of any exceedance. Expected risk control permits frequent violations on individual deployments. Our guarantee ensures that the violation rate exceeds $\alpha$ with probability at most $\delta$, which is the relevant requirement for safety-critical deployment. Learn Then Test~\citep{angelopoulos2021learn} provides a comparable high-probability guarantee over families of thresholds. Our Clopper--Pearson approach achieves the same guarantee type for nested thresholds with a simpler mechanism, justified by the fixed-sequence testing argument (Remark~\ref{rem:adaptivity}).

\paragraph{Selective prediction and deferral.}
The reject option~\citep{chow1970optimum,elyaniv2010foundations} allows classifiers to abstain on uncertain inputs, trading coverage for accuracy. Selective classification~\citep{geifman2017selective} provides risk-coverage tradeoffs for deep networks. Learning to defer~\citep{madras2018predict} routes inputs to human experts. Our framework generalizes selective prediction: rather than abstaining (discarding the input), we route to an alternative model. The safety constraint is on substitution quality, not prediction correctness. While selective classification controls conditional accuracy above a confidence threshold, we instead control degradation relative to a reference model and replace abstention with an alternative predictor. Although the conditional risk bound has a similar mathematical form, our safety criterion measures surrogate degradation rather than prediction correctness alone.

\paragraph{Model cascading and routing.}
Cascades~\citep{viola2001rapid,trapeznikov2013supervised} run models sequentially, escalating when confidence is low. FrugalGPT~\citep{chen2023frugalgpt} and Hybrid LLM~\citep{ding2024hybrid} cascade through LLM APIs. Mixture-of-experts~\citep{jacobs1991adaptive} routes via learned gating networks. These approaches are largely output-dependent: routing decisions use model outputs (confidence, predictions). Conversely, input-based routing frameworks \citep{wang2018skipnet, wu2018blockdrop} dynamically skip computation by evaluating only the input features $x$ before the main models run, reducing latency and avoiding unnecessary computation on the cheap model for hard inputs. However, these proactive methods rely on empirical heuristics and lack formal guarantees on prediction quality. While an input-based gate requires labeled training data from both models, this is a one-time offline cost amortized over all future inference.

Recent works on model selection with guarantees have successfully addressed prediction quality, but primarily through output-dependent approaches: post-hoc arbiters \cite{overman2025conformal}, output-based selective prediction \cite{wang2025linear, valkanas2025c3po}, and guaranteed sequential cascades \cite{chen2023frugalgpt}. Because these methods require executing at least one model before making routing decisions, they incur the very computational cost proactive routing aims to avoid. By framing proactive routing as an input-only classification task governed by a Neyman--Pearson feasibility condition, we characterize its theoretical limits and provide exact finite-sample safety bounds on the routed subset without requiring prior model execution.

\paragraph{Surrogate fidelity and interpretability.}
LIME~\citep{ribeiro2016why} and SHAP~\citep{lundberg2017unified} 
construct local surrogates to explain black-box predictions, routing 
implicitly when the surrogate agrees with the black-box 
(fidelity-based, i.e., $|f(x)-g(x)| \leq \tau$). This requires 
$f(x)$, which defeats proactive routing. Rudin~\citep{rudin2019stop} 
argues that practitioners default to black-boxes precisely because 
interpretable models lack formal per-instance deployment guarantees. 
Knowledge distillation~\citep{hinton2015distilling} trains surrogates 
globally but provides no such guarantee. We close this gap: our safety 
criterion is degradation relative to ground truth rather than agreement 
with the black-box, ensuring the surrogate is genuinely accurate on routed inputs—providing a formal safety characterization that complements Rudin's call for interpretable models in high-stakes settings.

\paragraph{Calibration.} 
Modern neural networks are often miscalibrated~\citep{guo2017calibration}. Post-hoc methods, including Platt scaling~\citep{niculescumizil2005predicting} and beta calibration~\citep{kull2017beta}, are commonly used to align predicted probabilities with true empirical frequencies. In our framework, probabilistic calibration is not strictly required for theoretical validity. The conformal procedure provides distribution-free safety guarantees regardless of the gate's Expected Calibration Error (ECE). However, we demonstrate empirically that correcting overconfident or underconfident gates via these methods affects threshold dynamics. Better calibration prevents the conformal procedure from selecting extreme thresholds to compensate for skewed score distributions. Recalibration may increase or decrease raw coverage depending on the initial miscalibration direction. However, it produces a more stable violation rate in finite-sample settings.

\FloatBarrier
\section{Problem Formulation}\label{sec:problem}

Let $f: \mathcal{X} \to \mathbb{R}$ be an accurate black-box model 
and $g: \mathcal{X} \to \mathbb{R}$ a simpler surrogate. Given input 
$x$ with true label $y$, we define the degradation:
\begin{equation}\label{eq:degradation}
  d(x,y) = |y - g(x)| - |y - f(x)|,
\end{equation}
which is the increase in absolute error from using $g$ instead of $f$. Positive $d$ means the surrogate is worse, and negative $d$ means it is better. Given tolerance $\tau \in \mathbb{R}$, an input is \emph{safe} if $d(x,y) \leq \tau$:
\begin{equation}
  Y(x) = \mathbf{1}[d(x,y) \leq \tau], \quad \pi = \Pr(Y = 1).
\end{equation}
A routing policy $\hat{\pi}(x) = \mathbf{1}[s(x) \geq t]$ routes to the surrogate when gate score $s(x)$ exceeds threshold $t$. The \emph{violation rate} among routed inputs is:
\begin{align}\label{eq:violation}
  V(t) &= \Pr(Y = 0 \mid s(X) \geq t) \notag\\
       &= \frac{(1-\pi)\,\mathrm{FPR}(t)}{(1-\pi)\,\mathrm{FPR}(t) 
          + \pi\,\mathrm{TPR}(t)},
\end{align}
where $\text{TPR}(t) = \Pr(s(X) \geq t \mid Y=1)$ and $\text{FPR}(t) = \Pr(s(X) \geq t \mid Y=0)$.

Note that computing degradation labels requires running both $f$ and $g$ on the training and calibration sets. This is a one-time offline cost.

\begin{definition}[Safe routing problem]
Find threshold $t$ maximizing coverage $\Pr(s(X) \geq t)$ subject to $V(t) \leq \alpha$.
\end{definition}

The parameters $\tau$ and $\alpha$ are decoupled: $\tau$ defines what ``safe'' means (continuous tolerance), while $\alpha$ controls what fraction of violations is acceptable among routed inputs. Setting $\tau = 1.0, \alpha = 0.2$ means: among inputs routed to the surrogate, at most 20\% may have degradation exceeding 1.0 units.

While we instantiate safety via degradation relative to the expensive model, the framework applies to any measurable objective that induces a binary safety label, including fidelity-based criteria or application-specific utility functions.

\FloatBarrier
\section{Method}\label{sec:method}

\subsection{Gate Training}

We train a logistic regression classifier to predict $Y$ from input features $x$. The gate outputs $s(x) = \hat{p}(Y=1|x)$, optionally calibrated via Platt scaling (also known as sigmoid calibration)~\citep{platt1999probabilistic, niculescumizil2005predicting} using a cross-validated holdout set. We use logistic regression as the gate classifier to minimize computational overhead. As a lightweight, linear baseline, it demonstrates that our conformal framework provides safe and efficient routing without requiring a complex, computationally expensive gate. Logistic regression also yields scores that are monotone in the likelihood ratio, producing concave ROC curves in the population limit~\citep{fawcett2006introduction}. This activates the tighter feasibility bound of Theorem~\ref{thm:tight_auc} and the coverage bound of Theorem~\ref{thm:coverage}.

\subsection{Conformal Threshold Selection}\label{sec:conformal}

Following the framework of Conformal Risk Control~\citep{angelopoulos2023conformal, bates2021distribution}, we use a calibration set $\{(s_i, Y_i)\}_{i=1}^n$ independent of training and test data to select the optimal routing threshold. Specifically, we select $t^*$ as the lowest threshold where the Clopper--Pearson~\citep{clopper1934use} upper confidence bound on the violation rate is at most $\alpha$:
\begin{equation}\label{eq:threshold}
  t^* = \min\Big\{ t \in \{s_1, \ldots, s_n\} : \text{UCB}_\delta(k(t), n(t)) \leq \alpha \Big\},
\end{equation}
where $n(t) = |\{i : s_i \geq t\}|$, $k(t) = |\{i : s_i \geq t, Y_i = 0\}|$, and
\begin{equation}
  \text{UCB}_\delta(k, n) = B^{-1}(1-\delta;\; k+1,\; n-k)
\end{equation}
is the exact one-sided Clopper--Pearson upper bound, with $B^{-1}$ the Beta quantile function. If no valid $t$ exists, $t^* = \infty$ (route nothing).

\begin{proposition}[Finite-sample guarantee]\label{prop:guarantee}
For any gate $s$, any distribution over $(X, Y)$, and calibration set of size $n$ exchangeable with the test point:
\[
  \Pr\!\big(V(t^*) > \alpha\big) \leq \delta.
\]
\end{proposition}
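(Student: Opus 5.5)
The approach is a fixed-sequence testing argument over the nested family of upper sets $\{s \geq t\}$, combined with the exact Clopper--Pearson coverage property. The calibration pairs $(s_i,Y_i)$ are treated as i.i.d.\ draws from the population law; exchangeability with the test point and independence from gate training are what make $V(t)$ the right population quantity. The gate $s$ is fixed, since it is trained on separate data, so $V(t)$ is a deterministic function of $t$.

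The first step is to reduce to a single critical threshold. Let $\mathcal{T}_{\mathrm{bad}}=\{t : V(t)>\alpha\}$. If this set is empty there is nothing to prove. Otherwise, the failure event $\{V(t^*)>\alpha\}$ requires $t^*\in\mathcal{T}_{\mathrm{bad}}$.

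I would then define the population coverage $q(t)=\Pr(s(X)\geq t)$ and order thresholds by $q$. This gives nested routed sets, and it is where Remark~\ref{rem:adaptivity} (fixed-sequence testing) enters. Testing the null $H_t: V(t)>\alpha$ sequentially from the most conservative threshold downward, and stopping at the first non-rejection, controls the familywise error at $\delta$ without multiplicity correction. The point is that falsely rejecting some true null in the sequence forces falsely rejecting the first true null encountered.

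The catch is that $V$ need not be monotone in $t$, so the first true null is not simply the boundary of a level set. I would handle this by observing that $t^*$ is defined as the minimum over a scan. The cleanest route is to interpret the rule as the fixed-sequence procedure that scans downward from the top score and stops at the first $t$ with $\mathrm{UCB}_\delta>\alpha$. Validity then follows: an error requires rejecting the first true null $H_{t_0}$ reached in the scan order.

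The second step is a single-test bound at that fixed $t_0$. Conditional on $n(t_0)=m$, the calibration points with $s_i\geq t_0$ are i.i.d.\ from the conditional law given $s(X)\geq t_0$. Hence $k(t_0)\sim\mathrm{Bin}(m,V(t_0))$. The exact Clopper--Pearson property gives $\Pr\big(B^{-1}(1-\delta;k+1,m-k)<p\big)\leq\delta$ for any $p$, by the Beta--Binomial tail identity. Applied with $p=V(t_0)>\alpha$ this yields $\Pr(\mathrm{UCB}_\delta(k(t_0),m)\leq\alpha)\leq\delta$. Integrating over $m$, with the case $m=0$ giving $\mathrm{UCB}=1>\alpha$ and so no rejection, gives the unconditional bound.

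The main obstacle is that the first true null $t_0$ is a population-defined object while the scan runs over data-dependent grid points $\{s_i\}$. I would handle this by defining the scan on population coverage levels, so that $t_0$ is deterministic. I would then use a monotone-coupling argument: any grid point $s_i$ lying in the first bad stretch has an upper set sandwiched between deterministic upper sets. Ties and discreteness would be handled by using upper sets in coverage space.

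If the literal "min over all $t$ with UCB $\leq\alpha$" rule (not stop-at-first-failure) must be analyzed when $V$ is non-monotone, I would fall back on a weaker route: either assume $V$ is nonincreasing in $t$, which follows from the ROC concavity invoked in Theorem~\ref{thm:tight_auc}, or note that the paper's Remark~\ref{rem:adaptivity} supplies exactly this reduction.
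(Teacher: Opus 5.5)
\textbf{There is a genuine gap: your argument proves validity for a different selection rule than the one in the statement.}

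The problem is the step where you ``interpret the rule'' as a downward scan from the top score that stops at the first $t$ with $\mathrm{UCB}_\delta>\alpha$. That is not a reading of \eqref{eq:threshold} and Algorithm~\ref{alg:routing}. Those scan upward and return the lowest certified grid point, regardless of any uncertified points above it. Your rule is also degenerate. With continuous scores the top grid point has $n(t)=1$, and $\mathrm{UCB}_\delta(k,1)\ge 1-\delta>\alpha$. So your downward scan fails its very first test and always abstains, and the two rules agree only when the paper's rule abstains too. Your fixed-sequence logic and single-test step are the right ingredients for a fixed-sequence procedure. Conditioning only on $n(t_0)=m$ is also the right move: conditioning on all scores, as the paper does, makes $k(t)$ Poisson-binomial with a score-dependent mean rather than $\mathrm{Bin}(n(t),V(t))$. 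But these ingredients validate a procedure that is not the one in the statement.

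Neither fallback repairs this for $t^*=\min\{t:\mathrm{UCB}_\delta(k(t),n(t))\le\alpha\}$. Monotonicity of $V$ does not reduce the min rule to a single test. The rule errs as soon as \emph{any} bad grid threshold is certified, and these certification events are not nested. Here is a concrete counterexample:
\begin{itemize}
\item Take scores continuous and independent of $Y$, with $\Pr(Y=0)=0.21$, $\alpha=0.2$, $\delta=0.1$ and $n\ge 18$. Then $V(t)\equiv 0.21$, which is constant (so nonincreasing), and the ROC is the diagonal (so concave). Every finite $t^*$ is therefore an error.
\item Let $K_m$ be the number of unsafe points among the $m$ top-scored calibration points. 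We have $\mathrm{UCB}_{0.1}(0,11)=1-0.1^{1/11}\approx 0.189$. We also have $\mathrm{UCB}_{0.1}(1,18)<0.2$, because $\Pr(\mathrm{Bin}(18,0.2)\le 1)\approx 0.099<0.1$.
\item Hence the disjoint events $\{K_{11}=0\}$ and $\{K_{11}=1,\,K_{18}=1\}$ each force a finite $t^*$, giving
\[
\Pr\big(V(t^*)>\alpha\big)\;\ge\;0.79^{11}+11(0.21)(0.79)^{17}\;\approx\;0.075+0.042\;>\;\delta .
\]
\end{itemize}

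So the proposition is false for the rule as defined. Remark~\ref{rem:adaptivity} cannot supply the reduction you defer to. It calls ``ascending order, accept the first success'' fixed-sequence testing, whereas the fixed-sequence principle stops at the first \emph{non}-rejection, which is the direction you correctly identified. The paper's own proof has exactly this hole.

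What can actually be proved is one of the following:
\begin{itemize}
\item The guarantee for a modified rule, for example your downward fixed-sequence scan over a fixed grid, started from a pre-specified threshold with adequate routed mass.
\item The guarantee for the min rule with $\mathrm{UCB}_\delta$ replaced by a bound valid simultaneously over the searched thresholds, for example Bonferroni over a fixed grid.
\end{itemize}
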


This follows directly from the coverage property of the Clopper--Pearson interval~\citep{clopper1934use}: $\Pr(\hat{p} \leq \text{UCB}_\delta(k,n)) \geq 1 - \delta$ for any true proportion $\hat{p}$, where $k \sim \text{Binomial}(n, \hat{p})$. With $n \approx 300$ calibration points (our setup), the bound is tight: for $k=0$ unsafe among $n=300$ routed calibration points, $\text{UCB}_{0.10}(0, 300) = 0.0076$, enabling routing with $\alpha$ as low as 0.01.

\begin{algorithm}[H]
\caption{Conformal Gate Routing}\label{alg:routing}
\begin{algorithmic}[1]
\REQUIRE Calibration scores $\{s_i\}_{i=1}^n$, labels $\{Y_i\}_{i=1}^n$, risk $\alpha$, confidence $\delta$
\STATE Sort unique scores: $t_1 < t_2 < \ldots < t_m$
\FOR{$j = 1, \ldots, m$}
  \STATE $n_j \gets |\{i : s_i \geq t_j\}|$, \quad $k_j \gets |\{i : s_i \geq t_j, Y_i = 0\}|$
  \IF{$\text{UCB}_\delta(k_j, n_j) \leq \alpha$}
    \RETURN $t^* = t_j$
  \ENDIF
\ENDFOR
\RETURN $t^* = \infty$ \COMMENT{Abstain from routing}
\end{algorithmic}
\end{algorithm}

\begin{remark}[Validity under threshold search]\label{rem:adaptivity}
Algorithm~\ref{alg:routing} evaluates multiple candidate thresholds on the same calibration data. We now argue that post-selection validity holds without any multiplicity correction. Condition on the realized gate scores $\{s_i\}_{i=1}^n$. The threshold ordering $t_1 < t_2 < \cdots < t_m$ is then deterministic, and the routed sets $\{i : s_i \geq t_j\}$ are fixed. The safety labels $\{Y_i\}_{i=1}^n$ depend on $(x_i, y_i)$ and the pre-trained models $f$ and $g$. Because the gate is trained on a separate split, these labels remain exchangeable conditional on the scores. Each Clopper--Pearson test is therefore exact given the scores. Algorithm~\ref{alg:routing} tests thresholds in a fixed ascending order and accepts the first one satisfying $\mathrm{UCB}_\delta(k_j, n_j) \leq \alpha$. By the fixed-sequence testing principle~\citep{marcus1976closed}, this controls the family-wise error rate at level $\delta$. The unconditional guarantee $\Pr(V(t^*) > \alpha) \leq \delta$ follows by iterated expectation. No Bonferroni correction or alternative calibration framework is required.
\end{remark}

\FloatBarrier
\section{Theoretical Analysis}\label{sec:theory}

\subsection{Routing Feasibility}

\begin{proposition}[Feasibility condition]\label{prop:feasibility}
The constraint $V(t) \leq \alpha$ is satisfiable with positive coverage if and only if there exists threshold $t$ such that:
\begin{equation}\label{eq:neyman}
  \frac{\mathrm{TPR}(t)}{\mathrm{FPR}(t)} \geq C(\pi, \alpha) \triangleq \frac{(1-\pi)(1-\alpha)}{\pi\alpha}.
\end{equation}
\end{proposition}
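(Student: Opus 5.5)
The plan is to rewrite the violation-rate formula \eqref{eq:violation} as an equivalent linear inequality in $\mathrm{TPR}(t)$ and $\mathrm{FPR}(t)$, and then divide by $\pi\alpha\,\mathrm{FPR}(t)$. We assume $0<\pi<1$ and $0<\alpha<1$, so that $C(\pi,\alpha)$ is finite and positive. We use the convention $\mathrm{TPR}/\mathrm{FPR}=+\infty$ when $\mathrm{FPR}(t)=0<\mathrm{TPR}(t)$.

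First fix a threshold $t$ with positive coverage. Coverage is
\[
\Pr(s(X)\ge t)=\pi\,\mathrm{TPR}(t)+(1-\pi)\,\mathrm{FPR}(t),
\]
so positive coverage means exactly that this denominator of \eqref{eq:violation} is strictly positive. On that set, multiplying $V(t)\le\alpha$ through by the positive denominator gives the equivalent condition
\[
(1-\pi)\,\mathrm{FPR}(t)\le \alpha(1-\pi)\,\mathrm{FPR}(t)+\alpha\pi\,\mathrm{TPR}(t),
\]
which rearranges to
\[
(1-\pi)(1-\alpha)\,\mathrm{FPR}(t)\le \pi\alpha\,\mathrm{TPR}(t).
\]
Every step is reversible because the multiplier is strictly positive. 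The iff statement therefore reduces to showing that, at a positive-coverage threshold, this linear inequality is equivalent to $\mathrm{TPR}(t)/\mathrm{FPR}(t)\ge C(\pi,\alpha)$.

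The remaining work, and the only real subtlety, is the boundary case split on $\mathrm{FPR}(t)$.

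If $\mathrm{FPR}(t)>0$, divide both sides by $\pi\alpha\,\mathrm{FPR}(t)>0$. This gives exactly \eqref{eq:neyman}.

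If $\mathrm{FPR}(t)=0$, positive coverage forces $\mathrm{TPR}(t)>0$. The linear inequality then holds trivially, $V(t)=0$, and the ratio is $+\infty\ge C$, so both sides are true.

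Conversely, suppose some $t$ satisfies \eqref{eq:neyman} with the ratio well defined. This requires $\mathrm{TPR}(t)>0$, since the ratio $0/0$ is excluded. Hence coverage at $t$ is positive, and the equivalence above yields $V(t)\le\alpha$. Combining the two directions proves the proposition.

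The main obstacle is bookkeeping rather than mathematics. One must state the conventions for the degenerate ratios ($0/0$ excluded, $c/0=\infty$ for $c>0$) and note the assumption $\pi,\alpha\in(0,1)$. Without these the displayed constant $C(\pi,\alpha)$ is not meaningful, and the case $\alpha\ge 1$ is trivially feasible.
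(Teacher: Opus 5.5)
Your proof is correct and follows the paper's argument: write $V(t)$ via Bayes' rule, multiply through by the positive denominator, rearrange to $(1-\alpha)(1-\pi)\,\mathrm{FPR}(t)\le \alpha\pi\,\mathrm{TPR}(t)$, and divide by $\mathrm{FPR}(t)$. You also handle the $\mathrm{FPR}(t)=0$ case and the converse direction explicitly, which the paper skips by assuming ``non-trivial routing''; in the converse, $\mathrm{TPR}(t)>0$ follows from excluding $0/0$ together with $C>0$, since $C>0$ rules out the ratio $0/\mathrm{FPR}(t)=0$.
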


The full algebraic derivation appears in Appendix~A.

\paragraph{Remark on Optimality.} 
Condition~\eqref{eq:neyman} implies that for any fixed false positive rate, the violation rate is minimized by maximizing the True Positive Rate. By the Neyman--Pearson lemma~\citep{neyman1933problem}, the likelihood ratio test is the most powerful test for this objective. Thus, the optimal gate $s(x)$ is the likelihood ratio $P(Y=1|x)/P(Y=0|x)$, which monotonically maps to the posterior $P(Y=1|x)$ used in our experiments.

The critical ratio $C(\pi, \alpha)$ captures the interaction between $\tau$ (via $\pi$) and the risk budget $\alpha$. When $\pi$ is large (lenient $\tau$, most inputs safe), $C$ is small and even weak gates suffice. When $\pi$ is small (strict $\tau$), $C$ grows and only gates with strong separation can route safely.

Following standard geometric formulations of Receiver Operating Characteristic (ROC) analysis~\citep{fawcett2006introduction}, we can translate this likelihood-ratio condition directly into ROC space.

\begin{corollary}[ROC Feasibility Geometry]
Let 
\[
C(\pi,\alpha) = \frac{(1-\pi)(1-\alpha)}{\pi \alpha}.
\]
Feasible routing exists if and only if the ROC curve
intersects the half-space
\[
\{(u,v)\in[0,1]^2 : v \ge C(\pi,\alpha)\,u\}.
\]
Equivalently, there exists a threshold $t$ such that
\(
\mathrm{TPR}(t) \ge C(\pi,\alpha)\,\mathrm{FPR}(t).
\)
\end{corollary}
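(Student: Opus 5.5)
The plan is to derive the corollary directly from Proposition~\ref{prop:feasibility} by rewriting the ratio condition in cross-multiplied form and reading it in ROC coordinates. First, identify the ROC curve with the set of points $(u,v) = (\mathrm{FPR}(t), \mathrm{TPR}(t))$ as $t$ ranges over thresholds; both coordinates lie in $[0,1]$, so each such point sits in the unit square. A threshold $t$ then corresponds to a point on the ROC curve lying in the half-space $\{v \ge C(\pi,\alpha)\,u\}$ exactly when $\mathrm{TPR}(t) \ge C(\pi,\alpha)\,\mathrm{FPR}(t)$. This makes the two formulations in the corollary equivalent by definition, and leaves only the task of matching them with the condition in Proposition~\ref{prop:feasibility}.

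Second, I would redo the underlying algebra from \eqref{eq:violation} rather than dividing by $\mathrm{FPR}(t)$, so that the case $\mathrm{FPR}(t)=0$ is not lost. The key assumption is positive coverage, meaning $(1-\pi)\mathrm{FPR}(t) + \pi\,\mathrm{TPR}(t) > 0$. Under it, $V(t) \le \alpha$ is equivalent to
\[
(1-\pi)\,\mathrm{FPR}(t) \le \alpha\big[(1-\pi)\,\mathrm{FPR}(t) + \pi\,\mathrm{TPR}(t)\big],
\]
which rearranges to $\pi\alpha\,\mathrm{TPR}(t) \ge (1-\pi)(1-\alpha)\,\mathrm{FPR}(t)$. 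For $\pi,\alpha \in (0,1)$ we can divide by $\pi\alpha > 0$, which gives $\mathrm{TPR}(t) \ge C(\pi,\alpha)\,\mathrm{FPR}(t)$. Each step is reversible, so this establishes both directions for every threshold with positive coverage. When $\mathrm{FPR}(t)>0$, this recovers \eqref{eq:neyman} exactly. When $\mathrm{FPR}(t)=0$, the ratio in \eqref{eq:neyman} is to be read as $+\infty$; the linear form then holds trivially and is the natural replacement for it.

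The main obstacle is the degenerate point $(0,0)$, which always satisfies $v \ge Cu$ but corresponds to zero coverage and so does not give feasible routing. I would handle this by stating the corollary, as the proposition does, for thresholds with positive coverage, which removes the origin from consideration. Equivalently, I would require the intersection to occur at some ROC point other than $(0,0)$. With that caveat made explicit, and assuming the nondegenerate case $0<\pi<1$, $0<\alpha<1$ (otherwise $C$ is undefined or trivial), the argument is complete.
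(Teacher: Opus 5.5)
Your proposal is correct and follows the paper's own route. Appendix~A derives $(1-\alpha)(1-\pi)\,\mathrm{FPR}(t) \le \alpha\pi\,\mathrm{TPR}(t)$ from the Bayes-rule form of $V(t)$, and the corollary is that inequality read in ROC coordinates; your handling of $\mathrm{FPR}(t)=0$ and your exclusion of the origin $(0,0)$ tighten what the paper covers only by assuming $\mathrm{FPR}(t)>0$, and the origin caveat is in fact needed for the literal statement to hold, since $(0,0)$ always lies in the half-space yet gives zero coverage.
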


The full algebraic derivation appears in Appendix~A.
\FloatBarrier
\subsection{Critical AUC}

\begin{theorem}[Sufficient AUC for feasibility]\label{thm:critical_auc}
Define the critical AUC:
\begin{equation}\label{eq:critical_auc}
  \Phi_c(\pi, \alpha) = \min\!\Big(1,\; \tfrac{C(\pi,\alpha)}{2}\Big).
\end{equation}
If\/ $\mathrm{AUC} \geq \Phi_c(\pi, \alpha)$, then there exists a routing threshold $t$ satisfying $V(t) \leq \alpha$ with positive coverage.
\end{theorem}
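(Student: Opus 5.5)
The plan is to reduce everything to Proposition~\ref{prop:feasibility}, in the ROC form of the Corollary. It suffices to exhibit a threshold $t$ with positive coverage and $\mathrm{TPR}(t) \ge C\,\mathrm{FPR}(t)$, where $C = C(\pi,\alpha)$. I would first record one algebraic identity: $C \le 1$ iff $(1-\pi)(1-\alpha) \le \pi\alpha$ iff $1-\pi \le \alpha$. I would then split on the value of $C$, since $\Phi_c = \min(1, C/2)$ behaves differently in each regime.

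\emph{Case $C \le 1$.} Take $t$ below every score, i.e.\ route everything. Then $\mathrm{TPR}=\mathrm{FPR}=1$ and coverage is $1>0$. The ratio condition reads $1 \ge C$, which holds, so $V(t) = 1-\pi \le \alpha$ directly. In this case the AUC hypothesis is not even needed.

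\emph{Case $C \ge 2$.} Here $\Phi_c = 1$, so the hypothesis forces $\mathrm{AUC}=1$. I would use the probabilistic form $\mathrm{AUC} = \Pr(s(X_1) > s(X_0)) + \tfrac12\Pr(s(X_1)=s(X_0))$, with $X_1 \sim X\mid Y{=}1$ and $X_0 \sim X \mid Y{=}0$ independent. $\mathrm{AUC}=1$ forces almost-sure strict separation, $\operatorname{ess\,inf} s(X_1) \ge \operatorname{ess\,sup} s(X_0)$ with no ties. Choosing $t$ strictly above $\operatorname{ess\,sup} s(X_0)$ but within the support of $s(X_1)$ gives $\mathrm{FPR}(t)=0$ and $\mathrm{TPR}(t)>0$. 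Such a $t$ exists, or else one can take $t$ equal to the boundary atom when the atom belongs to the positive class only. The ratio is then $+\infty \ge C$ and coverage is positive, so the claim follows from Proposition~\ref{prop:feasibility}. The only care needed is handling atoms at the boundary point.

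\emph{Case $1 < C < 2$.} This is the step I expect to be the main obstacle, and I am not confident the plan closes it as stated. The natural argument is by contradiction: suppose every achievable ROC point $(u,v)$ with $u>0$ satisfies $v < Cu$. Since ROC points always satisfy $v \le 1$, the curve lies under $\min(1,Cu)$, giving $\mathrm{AUC} < \int_0^1 \min(1,Cu)\,du$. I would contrast the two regimes:
\begin{itemize}
\item For $C \le 1$ this integral equals $C/2$, so the contradiction is clean. This reproduces the $C/2$ threshold, though that regime is already trivial.
\item For $1<C<2$ the integral equals $1 - \tfrac{1}{2C}$, which exceeds $C/2$ because $(C-1)^2>0$. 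So $\mathrm{AUC} \ge C/2$ does not contradict infeasibility.
\end{itemize}
Indeed, a concave ROC curve hugging $v = \min(1, Cu) - \varepsilon$ appears to give a counterexample.

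In writing the proof I would therefore do three things. First, I would try to check whether the authors intend additional structure that rules this out, for example the concavity assumption that feeds Theorem~\ref{thm:tight_auc}; I do not think concavity alone suffices. Second, failing that, I would prove the statement with the corrected threshold $\Phi_c = 1 - \tfrac{1}{2C}$ for $1 \le C < 2$, via the contradiction integral above, together with the clean cases $C\le 1$ and $C \ge 2$. Third, I would flag that $C/2$ is the correct threshold only when $C \le 1$.
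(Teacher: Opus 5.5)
\textbf{There is a genuine error in your Case $1<C<2$, and it comes from a reversed inequality, not from a flaw in the statement.} Your integral $\int_0^1\min(1,Cu)\,du = 1-\tfrac{1}{2C}$ is correct. But it does not exceed $C/2$:
\[
\frac{C}{2}-\Bigl(1-\frac{1}{2C}\Bigr)=\frac{C^2-2C+1}{2C}=\frac{(C-1)^2}{2C}>0 \qquad (C\neq 1).
\]
So $1-\tfrac{1}{2C}<C/2$, and your own contradiction argument closes. Infeasibility gives $\mathrm{AUC}<1-\tfrac{1}{2C}<C/2$, which contradicts $\mathrm{AUC}\ge C/2$. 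For example, at $C=1.5$ the largest infeasible AUC is $2/3<0.75=\Phi_c$.

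Your proposed counterexample, a concave ROC hugging $\min(1,Cu)$ from below, has AUC just under $1-\tfrac{1}{2C}$, which is below $C/2$. It therefore violates the hypothesis, not the conclusion. Likewise, your ``corrected'' threshold $1-\tfrac{1}{2C}$ is \emph{smaller} than $C/2$; it is the $\Phi_c^*$ of Theorem~\ref{thm:tight_auc}. Proving it sufficient proves the stated theorem a fortiori, and your argument does so without concavity, since it uses only $v\le 1$. As written, however, the proposal concludes that the theorem fails for $1<C<2$ and never proves it. That conclusion is false.

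\textbf{Comparison with the paper.} Once the sign is fixed, your argument and the paper's differ only in presentation. The paper's route is simpler for $C<2$: it drops the cap at $1$ and integrates $\mathrm{TPR}(u)<Cu$ over $(0,1]$ to get $\mathrm{AUC}<C/2$ directly, with no case split.

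\textbf{Where your proposal adds something: $C\ge 2$.} Here the paper only says ``clip at 1.'' But $\mathrm{AUC}<C/2$ does not contradict $\mathrm{AUC}=1$ when $C>2$, so a separate argument is needed. Your perfect-separation argument, which produces a threshold with $\mathrm{FPR}=0$ and $\mathrm{TPR}>0$ and handles the boundary atoms, supplies it correctly. Alternatively, the capped bound $\mathrm{AUC}<1-\tfrac{1}{2C}<1$ covers all $C>1$ at once.

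\textbf{One detail to state explicitly.} Your infeasibility hypothesis should also exclude achievable points with $u=0$, $v>0$. These give $V=0$ with positive coverage, so they count as feasible, and excluding them is what makes the interpolated ROC lie strictly below $Cu$ on $(0,1]$.
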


The full algebraic derivation appears in Appendix~A.

\begin{remark}
Theorem~\ref{thm:critical_auc} is sufficient but conservative.
Theorem~\ref{thm:tight_auc} partially closes the gap: 
at $\tau=0.5$, the required AUC drops from 
$\Phi_c = 1.0$ to $\Phi_c^* = 0.76$.
The residual gap (observed AUC $0.57 < 0.76$, yet routing succeeds) 
reflects a fundamental limit of scalar summaries: feasibility 
is a local ROC property (Corollary~1), while AUC is global. 
No single-scalar condition can be both necessary and sufficient.
This yields a hierarchy: Corollary~1 (exact, requires full ROC) 
$\to$ Theorem~\ref{thm:tight_auc} (tight scalar, concave ROC) 
$\to$ Theorem~\ref{thm:critical_auc} (sufficient, any ROC).
\end{remark}

\begin{corollary}
As $\alpha \to 0$ or $\pi \to 0$: $\Phi_c \to 1$, so near-perfect ranking is required.
For $\pi = 0.82$ (our $\tau = 2.0$) and $\alpha = 0.2$, we have $C = 0.88$ and $\Phi_c = 0.44$. Our observed AUC of $0.59$ exceeds this threshold, confirming feasibility.
For $\pi = 0.66$ ($\tau = 0.5$) and $\alpha = 0.2$: 
$\Phi_c = 1.0$ but $\Phi_c^* = 0.76$ (Theorem~\ref{thm:tight_auc}), 
yet routing succeeds empirically with 24\% coverage, 
illustrating that even the tight bound cannot capture local ROC structure.
\end{corollary}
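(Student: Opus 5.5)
The corollary is a direct specialization of Theorem~\ref{thm:critical_auc}, so the plan is to work entirely with the closed form $C(\pi,\alpha) = (1-\pi)(1-\alpha)/(\pi\alpha)$ from Proposition~\ref{prop:feasibility} and the definition $\Phi_c(\pi,\alpha) = \min(1, C(\pi,\alpha)/2)$ in \eqref{eq:critical_auc}.

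\textbf{Limiting statement.} I would first show that $C(\pi,\alpha) \to \infty$ under either limit.
\begin{itemize}
\item For fixed $\pi \in (0,1)$, the numerator $(1-\pi)(1-\alpha) \to 1-\pi > 0$ as $\alpha \to 0$, while the denominator $\pi\alpha \to 0^+$.
\item Symmetrically, for fixed $\alpha \in (0,1)$, the numerator $(1-\pi)(1-\alpha) \to 1-\alpha > 0$ as $\pi \to 0$, while $\pi\alpha \to 0^+$.
\end{itemize}
Hence $C \geq 2$ once $\alpha$ (respectively $\pi$) is small enough, for instance $\alpha \leq (1-\pi)/(1+\pi)$ for fixed $\pi$. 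At that point $\Phi_c = 1$ exactly, not merely in the limit. The interpretation ``near-perfect ranking is required'' is then the statement that Theorem~\ref{thm:critical_auc} demands $\mathrm{AUC} \geq 1$. I would phrase this carefully, since Theorem~\ref{thm:critical_auc} is only sufficient. The corollary is really a statement about the sufficient condition, with Corollary~1 remaining the exact criterion.

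\textbf{First numerical example.} Substituting $\pi = 0.82$ and $\alpha = 0.2$ gives
\[
C = \frac{0.18 \cdot 0.8}{0.82 \cdot 0.2} = \frac{0.144}{0.164} \approx 0.878,
\]
so $\Phi_c \approx 0.439 < 1$. The observed AUC of $0.59$ exceeds this value. Theorem~\ref{thm:critical_auc} then yields a threshold $t$ with $V(t) \leq \alpha$ and positive coverage. This holds at the population level; I would note that the empirical AUC is being treated as a proxy for the population AUC.

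\textbf{Second numerical example.} For $\pi = 0.66$, the same substitution gives
\[
C = \frac{0.34 \cdot 0.8}{0.66 \cdot 0.2} = \frac{0.272}{0.132} \approx 2.06 > 2,
\]
so $\Phi_c = 1$.

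\textbf{Main obstacle.} The value $\Phi_c^* = 0.76$ comes from Theorem~\ref{thm:tight_auc}, which is stated later in the paper. I would take it as given, evaluating its formula at $(\pi,\alpha) = (0.66, 0.2)$ under the concavity assumption justified by the logistic gate. The reported $24\%$ coverage with observed AUC $0.57 < 0.76$ is an empirical observation rather than a provable claim. The proof would only establish that it does not contradict anything: both AUC conditions are merely sufficient, and feasibility is governed by the local condition $\mathrm{TPR}(t) \geq C\,\mathrm{FPR}(t)$ of Corollary~1.
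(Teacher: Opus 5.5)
Your proposal is correct and matches the paper. The paper gives no separate proof; the corollary is direct substitution into $C(\pi,\alpha)$, $\Phi_c=\min(1,C/2)$ and $\Phi_c^*=1-1/(2C)$. Your values ($C\approx 0.878$; $C\approx 2.06$, hence $\Phi_c^*\approx 0.757$) are right, and so are your caveats: both AUC thresholds are only sufficient, and the $24\%$ coverage is an empirical observation.

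One small addition: $C\le 1$ is equivalent to $1-\pi\le\alpha$, so at $\pi=0.82$ routing everything already gives $V=0.18\le 0.2$. That case is therefore feasible for any gate, and the AUC comparison there is true but not needed.
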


\begin{figure}[htbp]
  \centering
  \includegraphics[width=\columnwidth]{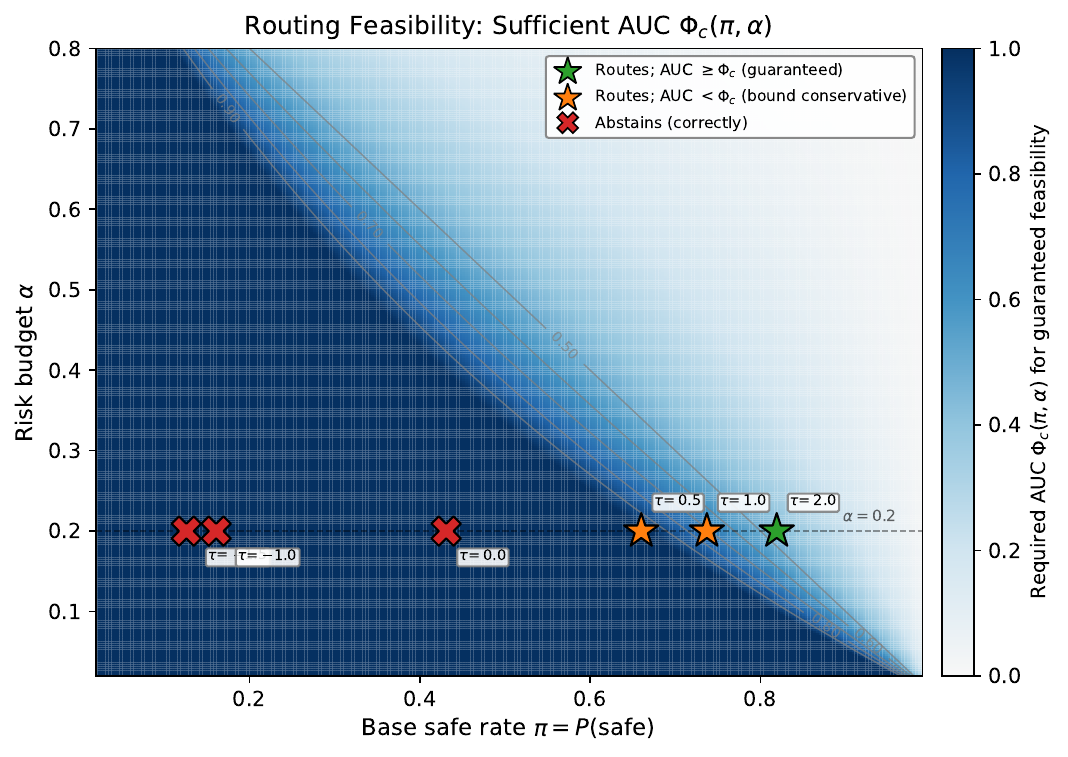}
  \caption{Feasibility landscape: critical AUC $\Phi_c(\pi, \alpha)$ as a function of base safe rate $\pi$ and risk budget $\alpha$.
  Darker regions require higher AUC for guaranteed feasibility.
  Stars mark our six $\tau$ operating points at $\alpha = 0.2$, colored by empirical outcome: green if routing achieves positive coverage, red if coverage is zero.
  At $\tau \leq 0$ (low $\pi$), the gate falls in the hard region and correctly abstains. At $\tau = 0.5$ and $\tau = 1.0$, routing succeeds despite AUC falling below $\Phi_c$, illustrating that the sufficient condition is conservative.
  At $\tau = 2.0$, AUC exceeds $\Phi_c$ and feasibility is guaranteed.}\label{fig:feasibility}
\end{figure}

\begin{theorem}[Tight AUC under concavity]\label{thm:tight_auc}
Let $C = C(\pi,\alpha)$. For any gate with \emph{concave} ROC curve, define
\begin{equation}\label{eq:tight_auc}
  \Phi_c^*(\pi,\alpha) = 
  \begin{cases}
    1 - \dfrac{1}{2C} & \text{if } C > 1,\\[6pt]
    \tfrac{1}{2}       & \text{if } C \leq 1.
  \end{cases}
\end{equation}
\begin{enumerate}[leftmargin=*,topsep=2pt,itemsep=1pt]
  \item[\emph{(i)}] If\/ $\mathrm{AUC} \geq \Phi_c^*$, feasible routing exists.
  \item[\emph{(ii)}] The bound is tight: for any $\varepsilon > 0$, a concave ROC with $\mathrm{AUC} = \Phi_c^* - \varepsilon$ exists that is infeasible.
  \item[\emph{(iii)}] $\Phi_c^* < \Phi_c$ for all $C > 1$, strictly improving Theorem~\ref{thm:critical_auc}.
\end{enumerate}
\end{theorem}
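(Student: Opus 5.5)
The plan is to work entirely in ROC space through Corollary~1: feasible routing exists if and only if some point $(u,v)=(\mathrm{FPR}(t),\mathrm{TPR}(t))$ on the ROC curve satisfies $v \ge C u$ with $v>0$, since $v>0$ is what gives positive coverage. I would write the concave ROC as a function $R:[0,1]\to[0,1]$ with $R(1)=1$, nondecreasing and concave, so that $\mathrm{AUC}=\int_0^1 R(u)\,du$. I would first dispose of the degenerate case $R(0)>0$. There the threshold at $u=0$ already has $\mathrm{TPR}>0$ and $\mathrm{FPR}=0$, so it is feasible. For all remaining cases I take $R(0)=0$.

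For part (i) I would argue by contraposition. Suppose routing is infeasible, so that $R(u)<Cu$ for every $u\in(0,1]$. Together with $R\le 1$ this gives $R(u)\le \min(Cu,1)$ on $[0,1]$. If $C\le 1$, concavity with $R(0)=0$ and $R(1)=1$ forces $R(u)\ge u\ge Cu$, so this case is never infeasible, and in any event a concave ROC has $\mathrm{AUC}\ge \tfrac12=\Phi_c^*$. If $C>1$, integrating the bound gives
\[
\mathrm{AUC}\le\int_0^1\min(Cu,1)\,du = \tfrac{1}{2C}+\bigl(1-\tfrac1C\bigr)=1-\tfrac{1}{2C}.
\]
The step I expect to need the most care is making this inequality strict, because the statement $\mathrm{AUC}\ge\Phi_c^*$ must rule out equality. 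On $(0,1/C)$ the inequality $R(u)<Cu$ holds strictly. Since $R$ is concave, it is continuous on the open interval, so the strict gap persists on a set of positive measure. Hence $\mathrm{AUC}<1-\tfrac1{2C}$, which is the contrapositive of (i).

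For part (ii) I would take $C>1$ and use the piecewise-linear family $R_{C'}(u)=\min(C'u,1)$ with $1\le C'<C$. Each member is concave and is a valid ROC. Its likelihood ratio satisfies $R_{C'}(u)/u\le C'<C$ for all $u>0$, so it is infeasible by Corollary~1. Its AUC equals $1-\tfrac{1}{2C'}$, which is continuous and increasing in $C'$ and tends to $\Phi_c^*$ as $C'\uparrow C$. For small $\varepsilon$ the intermediate value theorem then gives a $C'$ with AUC exactly $\Phi_c^*-\varepsilon$. For larger $\varepsilon$, I would mix with the diagonal, which remains concave and infeasible. For $C\le1$ I would note that (ii) is vacuous: no concave ROC has AUC below $\tfrac12$, and every concave ROC is feasible there, so the bound $\tfrac12$ is trivially attained.

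Part (iii) is a direct comparison with $\Phi_c=\min(1,C/2)$ from Theorem~\ref{thm:critical_auc}, valid for $C>1$. First, $1-\tfrac1{2C}<1$ always holds. Second, $1-\tfrac1{2C}<\tfrac C2$ is equivalent to $2C-1<C^2$, that is, $(C-1)^2>0$, which holds for every $C\neq1$. Therefore $\Phi_c^*<\Phi_c$ whenever $C>1$.
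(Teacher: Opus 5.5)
\textbf{The gap: large $\varepsilon$ in (ii).} For large $\varepsilon$ you propose mixing with the diagonal, but this cannot push the AUC below $\tfrac12$. Any concave ROC with $R(0)\ge 0$ and $R(1)=1$ satisfies $R(u)\ge u$, so it has $\mathrm{AUC}\ge\tfrac12$, and mixtures with the diagonal inherit this. In fact $R_1$ \emph{is} the diagonal, so your family $\{R_{C'}\}_{1\le C'<C}$ already covers every AUC in $[\tfrac12,\Phi_c^*)$ and mixing adds nothing. You use exactly this fact to dispose of $C\le 1$, so the proposal is internally inconsistent.

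For $C>1$ and $\varepsilon>\tfrac12-\tfrac{1}{2C}$, no concave ROC at all has $\mathrm{AUC}=\Phi_c^*-\varepsilon$. So (ii) read literally is false there. For $C\le 1$ it is false rather than vacuous, since no ROC is infeasible. What is true, and what the paper proves (only inside the case $C>1$), is weaker: the infeasible concave curves $\min(Lu,1)$ with $L=C-\eta$ have AUC $1-\tfrac{1}{2L}$ arbitrarily close to $\Phi_c^*$ from below. The paper never hits $\Phi_c^*-\varepsilon$ exactly. Your intermediate-value argument strengthens this to exact attainment for every $\varepsilon\in(0,\tfrac12-\tfrac{1}{2C}]$. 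State (ii) with that range and drop the mixing sentence.

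\textbf{Parts (i) and (iii).} These are correct and use the same envelope integral as the paper, but your route through (i) is sounder. The paper goes through the secant lemma and the initial slope $L=\mathrm{ROC}'(0^+)$, and asserts that infeasibility means $L<C$. That fails when $R(u)/u$ increases to $C$ without attaining it. For example, with $C=2$, $R(u)=2u-u^2$ is concave and infeasible but has $L=C$, so the paper's chain only yields $\mathrm{AUC}\le\Phi_c^*$.

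Your bound $R(u)\le\min(Cu,1)$ holds strictly at every point of $(0,1/C)$, which gives strictness in all cases. You do not need continuity for this: a nonnegative integrand that is positive on a set of positive measure has positive integral.

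Your argument also never uses concavity, and for $C\le 1$ the point $(1,1)$ already suffices, as in the paper. So sufficiency of $\Phi_c^*$ holds for arbitrary ROCs, and because the extremal curves in (ii) are concave, the threshold is tight in that larger class too. Finally, your extra check $1-\tfrac{1}{2C}<1$ in (iii) covers the regime $C>2$, where $\Phi_c=1$ and the paper's comparison with $C/2$ alone is not the relevant one.
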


The full algebraic derivation appears in Appendix~A.

\begin{theorem}[Coverage bound]\label{thm:coverage}
Under the conditions of Theorem~\ref{thm:tight_auc} with $C>1$ and
$\mathrm{AUC}=A \ge \Phi_c^*$, the optimal coverage subject to
$V(t)\le \alpha$ satisfies
$\mathrm{Cov}^* \ge \min\!\big\{1,\; (2A-1)(1-\pi) / [(C-1)\alpha]\big\}$.
The bound is tight (Appendix~\ref{app:coverage_proof}).
\end{theorem}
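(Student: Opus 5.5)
The plan is to reduce the coverage bound to a statement about a single point of the ROC curve, and then bound that point through the AUC using concavity. Write the ROC curve as a concave nondecreasing function $R:[0,1]\to[0,1]$ with $R(0)=0$ and $R(1)=1$, so that $\mathrm{TPR}=R(u)$ at $\mathrm{FPR}=u$. Coverage at operating point $u$ is $\pi R(u)+(1-\pi)u$. By Proposition~\ref{prop:feasibility} (equivalently Corollary~1), $u$ is feasible iff $R(u)\ge Cu$. Concavity together with $R(0)=0$ makes $R(u)/u$ nonincreasing, so the feasible set is an interval $(0,u^*]$, where $u^*=\sup\{u: R(u)\ge Cu\}$. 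Coverage is increasing in $u$, so $\mathrm{Cov}^*$ is attained at $u^*$. For $u^*<1$ we have $R(u^*)=Cu^*$ by continuity.

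\textbf{Step 1 (coverage identity).} At $u^*$,
\[
\mathrm{Cov}^*=u^*(\pi C+1-\pi)=u^*\,\frac{1-\pi}{\alpha},
\]
using $\pi C=(1-\pi)(1-\alpha)/\alpha$. It therefore suffices to show $u^*\ge (2A-1)/(C-1)$, capped so that coverage is at most $1$. The case $u^*=1$ gives coverage $1$ trivially.

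\textbf{Step 2 (area bound in terms of $u^*$).} The AUC splits as
\[
A=\int_0^{u^*}R+\int_{u^*}^1 R.
\]
On $[u^*,1]$ we have $R(u)\le Cu$, by the monotonicity of $R(u)/u$, and also $R\le 1$. On $[0,u^*]$, concavity gives a supporting line at $u^*$ of some slope $m\in[0,C]$, with $R(u)\le Cu^*+m(u-u^*)$. The goal is an inequality of the form $2A-1\le (C-1)u^*$. The natural comparison curve is the two-segment ROC through $(0,0)$, $(u^*,Cu^*)$, $(1,1)$, whose area is exactly $\tfrac12+\tfrac{(C-1)u^*}{2}$. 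If the upper bound on $A$ can be routed through this polygon, inverting it gives $u^*\ge(2A-1)/(C-1)$. Tightness then follows by taking $R$ equal to that polygon: there the feasible region ends exactly at $u^*$, so every inequality above is an equality.

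\textbf{Main obstacle.} Step 2 is where I expect trouble, because the inequalities point the wrong way. Concavity puts $R$ \emph{above} the chord polygon, so the polygon bounds $A$ from below, not from above. The honest upper bound is
\[
A\le\int_0^{u^*}R+\int_{u^*}^1\min(1,Cu)\,du,
\]
which is achieved by $\min(1,Cu)$. That bound alone only controls $u^*$ up to the condition $A\ge\Phi_c^*$, and near $A=\Phi_c^*$ it gives no useful lower bound on $u^*$. What I would try first is to parametrize the extremal concave curves, which are piecewise linear with a kink at $u^*$ and slope $\le C$ afterwards. Over these I would minimize $u^*$ for fixed $A$, treating this as a two-parameter optimization in $u^*$ and the post-kink slope. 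I would then check whether the minimizer reproduces $(2A-1)/(C-1)$, or instead a weaker expression. If it is weaker, the stated bound would need to be corrected to that expression: for instance, curves of the form $\min(1,C'u)$ with $C'\uparrow C$ probe the boundary $A\approx\Phi_c^*$. The tightness claim would then be read off from the same extremal family.
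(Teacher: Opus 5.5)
\textbf{The obstacle is genuine; the statement is false.} Your Step~1 is the same as the first half of the paper's proof: the boundary identity $\mathrm{Cov}=\frac{1-\pi}{\alpha}u$, and the interval structure of the feasible set from Lemma~\ref{lem:secant}. The obstacle you flag in Step~2 is exactly where the paper's own argument breaks. The paper correctly observes that the two-segment curve through $(0,0)$, $(u_{\max},Cu_{\max})$, $(1,1)$ \emph{minimizes} area among concave ROCs with the same $u_{\max}$, so $A\ge\frac{1+(C-1)u_{\max}}{2}$. It then concludes $u_{\max}\ge\frac{2A-1}{C-1}$. That implication is reversed: the inequality yields $u_{\max}\le\frac{2A-1}{C-1}$, which is vacuous under the hypothesis. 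So your proposal does not establish the statement, but no completion of Step~2 can, because the statement is false.

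\textbf{Counterexample.} Any feasible $u$ satisfies $Cu\le R(u)\le 1$, so $u^*\le 1/C$. Hence
$\mathrm{Cov}^*\le\frac{1-\pi}{\alpha C}=\frac{\pi}{1-\alpha}<1$ when $C>1$. This is also the trivial cap for any rule with $V\le\alpha$: routed safe mass is at most $\pi$ and must be at least a $(1-\alpha)$ fraction of the coverage. But $\frac{2A-1}{C-1}>\frac1C$ iff $A>1-\frac{1}{2C}=\Phi_c^*$. So the claimed bound exceeds the maximum achievable coverage for \emph{every} concave ROC with $A>\Phi_c^*$.

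Concretely, $\pi=0.5$ and $\alpha=0.2$ give $C=4$. A perfect gate has $\mathrm{Cov}^*=0.625$, yet the theorem asserts $\mathrm{Cov}^*\ge 0.5/(3\cdot 0.2)\approx 0.83$. The paper's tightness example is likewise not an ROC, since its breakpoint height $Cu_{\max}$ exceeds $1$.

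Your probe family gives counterexamples only if you take $C'>C$ rather than $C'\uparrow C$. For $C'<C$, the curve $\min(1,C'u)$ is infeasible and has $A<\Phi_c^*$. For $C'>C$, it has $u^*=1/C$ and AUC $1-\frac{1}{2C'}\in(\Phi_c^*,1)$.

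\textbf{The correct bound.} Your planned optimization is the right route, and the supporting-line bound you wrote down is its key ingredient. Fix $u^*=a$ and bound $R\le\min(1,\ell)$, where $\ell$ is the supporting line at $(a,Ca)$ with slope $m\le C$ (note $R'_+(a)\le C$). Then maximize the area over $m$.
\begin{itemize}
\item For $\frac{1}{2C}\le a\le\min\{\frac12,\frac1C\}$, the optimum is $m=(1-Ca)/a$, which gives $A\le 1-2a(1-Ca)$.
\item For $a<\frac{1}{2C}$, one gets $A<\Phi_c^*$.
\end{itemize}
Inverting this for $C\ge 2$ gives the sharp bound $\mathrm{Cov}^*\ge\frac{\pi}{1-\alpha}\cdot\frac{1+\sqrt{1-2C(1-A)}}{2}$. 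It is approached by $\min(Lu,\ell(u),1)$ as $L\to\infty$. This bound lies in $[\frac{\pi}{2(1-\alpha)},\frac{\pi}{1-\alpha}]$ and is strictly below the stated expression.
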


\FloatBarrier
\subsection{The Calibration--Efficiency Separation}

The conformal guarantee (Proposition~\ref{prop:guarantee}) holds for any gate score $s(x)$, independent of its probabilistic calibration. However, calibration affects efficiency (coverage) and the finite-sample stability of the selected threshold.

For a perfectly calibrated gate with $s(x)=\Pr(Y=1\mid x)$, the expected violation rate at threshold $t$ equals $1-t$, so the conformal procedure selects $t^*\approx 1-\alpha$ (up to finite-sample correction). If the gate is overconfident (e.g., $s(x)=0.9$ when the true probability is $0.7$), conformal compensates by shifting to a more extreme threshold (e.g., $t^*=0.95$). Underconfidence similarly forces lower thresholds. In all cases safety is preserved, but extreme thresholds increase sensitivity to finite-sample noise, leading to higher variance in realized violations.

Thus, calibration determines which inputs are routed, how many are routed, and threshold stability, but not the validity of the risk bound. Appendix~E confirms this empirically: beta calibration~\citep{kull2017beta}, temperature scaling~\citep{guo2017calibration}, and isotonic regression~\citep{zadrozny2002transforming} reduce ECE (0.18 to 0.06) while preserving safety. Coverage changes only marginally ($\le 2.5$pp). Overall efficiency remains primarily governed by ranking quality (AUC), while calibration stabilizes finite-sample thresholding.

\FloatBarrier
\section{Experiments}\label{sec:experiments}

\subsection{Setup}

We evaluate our approach on 35 regression datasets from LCBench~\citep{zimmer2021auto,vanschoren2014openml}, where each dataset corresponds to a different OpenML classification task and contains 2000 hyperparameter configurations with recorded validation accuracy at epoch 50. Inputs are hyperparameter settings (learning rate, batch size, architecture choices, etc.) and the target is validation accuracy. Data is split 55/15/15/15 into train, validation, calibration, and test sets with strict separation. Additional experiments on 22 general-purpose OpenML regression datasets (air quality, housing, energy, chemistry, physics, robotics, etc.) are reported in Appendix~\ref{app:openml-general}.

The expensive model $f$ is a random forest~\citep{breiman2001random}, treated as a black-box ensemble with 1500 trees. The cheap surrogate $g$ is a decision tree with depth selected on the validation set from $\{2,3,4,5,7,9,11,13,15\}$ by minimizing MAE. Degradation is $d(x) = |y - g(x)| - |y - f(x)|$.

We evaluate six tolerance levels $\tau \in \{-1.5, -1.0, 0.0, 0.5, 1.0, 2.0\}$ spanning strict ($\tau < 0$: surrogate must be better) to lenient ($\tau = 2.0$: up to 2.0 extra error acceptable). We test ten risk levels $\alpha \in \{0.05, 0.10, \ldots, 0.80\}$ with $\delta = 0.10$.

\subsection{Methods}
We compare against five baselines:
\begin{enumerate}
  \item \textbf{Gate conformal} (ours): logistic regression gate with Clopper--Pearson threshold selection.
  \item \textbf{Naive ($t=0.5$)}: route if gate score $\geq 0.5$. No formal guarantee.
  \item \textbf{Oracle}: route iff $d(x) \leq \tau$. This provides an upper bound on achievable coverage.
  \item \textbf{Always-BB / Always-CM}: never/always route.
  \item \textbf{Random (matched)}: route uniformly at random at conformal's coverage level.
\item \textbf{Regression Conformal (Baseline):}
As an alternative to the binary gate, we predict the degradation $d(x)$ via regression. We train a ridge model~\citep{hoerl1970ridge} $\hat{h}(x)$ and apply split conformal regression~\citep{papadopoulos2002inductive, lei2018distribution} to obtain an upper bound $\hat{d}_{\mathrm{UCB}}(x)$ at level $1-\alpha$, routing when $\hat{d}_{\mathrm{UCB}}(x)\le\tau$. A non-linear \texttt{HistGradientBoostingRegressor} yields similar results (Appendix~\ref{app:hgbr}), indicating the gap is not driven by regressor capacity.
\end{enumerate}

\FloatBarrier
\subsection{Gate Properties, Feasibility, and Guarantee Reliability}
\label{sec:gate_and_reliability}

\begin{table}[htbp]
\caption{Gate properties across tolerance levels. $\pi$: base safe rate; AUC: gate ranking quality (median [IQR]); ECE: calibration error. $\Phi_c$/$\Phi_c^*$: sufficient AUC thresholds from Theorems~\ref{thm:critical_auc}/\ref{thm:tight_auc}. Coverage and mean violation at $\alpha=0.2$ averaged over 35 datasets.}\label{tab:gate}
\centering\small
\begin{tabular}{@{}rcccccccc@{}}
\toprule
$\tau$ & $\pi$ & AUC & ECE & $\Phi_c$ & $\Phi_c^*$ & \multicolumn{2}{c}{Conf.\ ($\alpha\!=\!0.2$)} \\
\cmidrule(lr){7-8}
& & (med.) & & & & Cov. & Viol.  \\
\midrule
$-1.5$ & 0.13 & 0.71 & 0.07 & 1.00 & 0.92 & 0.000 & --- \\
$-1.0$ & 0.16 & 0.70 & 0.09 & 1.00 & 0.95 & 0.000 & --- \\
$0.0$  & 0.43 & 0.54 & 0.25 & 1.00 & 0.88 & 0.003 & 0.03 \\
$0.5$  & 0.66 & 0.57 & 0.20 & 1.00 & 0.76 & 0.243 & 0.15 \\
$1.0$  & 0.74 & 0.58 & 0.17 & 0.70 & 0.65 & 0.446 & 0.14 \\
$2.0$  & 0.82 & 0.59 & 0.12 & 0.44 & 0.50 & 0.661 & 0.12 \\
\bottomrule
\end{tabular}
\end{table}

\begin{table}[htbp]
\caption{Fraction of (dataset, $\tau$) pairs where the realized violation rate exceeds target $\alpha$. Gate conformal is broadly consistent with $\delta = 0.10$. Baselines violate substantially more often.}\label{tab:guarantee}
\centering\small
\begin{tabular}{@{}lccc@{}}
\toprule
$\alpha$ & Gate Conf. & Reg.\ Conf. & Naive \\
\midrule
0.10 & 7/34 \hphantom{0}(21\%) & 31/50 \hphantom{0}(62\%) & 151/172 (88\%) \\
0.20 & 8/66 \hphantom{0}(12\%) & 57/102 (56\%) & 128/172 (74\%) \\
0.30 & 6/82 \hphantom{00}(7\%) & 80/144 (56\%) & 106/172 (62\%) \\
0.50 & 8/112 \hphantom{0}(7\%) & 64/181 (35\%) & \hphantom{0}62/172 (36\%) \\
\bottomrule
\end{tabular}
\end{table}

Tables~\ref{tab:gate}--\ref{tab:guarantee} summarize (i) when routing is feasible (Proposition~\ref{prop:feasibility}) and (ii) how often the target is met across datasets. 

\paragraph{$\pi$ drives feasibility (ROC geometry).}
At $\tau \le 0$, $\pi < 0.45$ and the sufficient bound gives $\Phi_c=1.0$, so the conformal procedure abstains (zero coverage), consistent with Proposition~\ref{prop:feasibility}. At $\tau \ge 1.0$, $\pi$ increases ($\pi \ge 0.74$), reducing the required separation (smaller $C(\pi,\alpha)$) and enabling substantial coverage (e.g., 66\% at $\tau=2.0$).

\paragraph{Moderate AUC can suffice due to local ROC slope.}
Although Theorems~\ref{thm:critical_auc}--\ref{thm:tight_auc} provide useful scalar feasibility thresholds, feasibility is ultimately a local ROC condition: the conformal search can select a threshold that isolates a small, high-purity region (steep initial ROC slope) even when global AUC is modest. This explains successful routing at $\tau\in\{0.5,1.0\}$ despite AUC falling below $\Phi_c$.

\paragraph{ECE affects efficiency, not validity.}
ECE varies widely (0.07--0.25) and peaks near $\tau=0$ where the task is most ambiguous, but the distribution-free guarantee depends only on the calibration procedure, not on probabilistic calibration quality (Section~5.3). Miscalibration primarily shifts the selected threshold $t^*$ and thus coverage, rather than invalidating the risk control.

\paragraph{Guarantee reliability across datasets.}
Table~\ref{tab:guarantee} is the main empirical validity check. Gate conformal exceeds $\alpha$ on 7--12\% of (dataset, $\tau$) pairs for $\alpha \geq 0.2$. This is broadly consistent with $\delta = 0.10$. At $\alpha = 0.10$, the exceedance rate rises to 21\%. This occurs because at strict $\alpha$, the conformal procedure selects high thresholds that route very few calibration points. When the effective routed calibration size $n(t^*)$ is small, the CP bound is correct but necessarily loose. The denominator of (dataset, $\tau$) pairs achieving positive coverage is also dominated by borderline-feasible settings at this strict level. For $\alpha \geq 0.20$, routed calibration sets are larger and exceedance rates (7--12\%) align with the nominal $\delta = 0.10$. In contrast, regression conformal and naive thresholding violate $\alpha$ on 35--88\% of cases. This indicates that direct regression on degradation is substantially less reliable for conditional risk control.

\paragraph{Robustness to model families.}
We replicate the evaluation with XGBoost~\citep{chen2016xgboost} and MLP~\citep{goodfellow2016deep} as the expensive model (same decision-tree surrogate). Qualitative conclusions remain unchanged. Full results appear in Appendices~D--E.

\FloatBarrier
\subsection{Coverage--Violation Pareto Frontier}

Figure~\ref{fig:pareto} shows the empirical Pareto frontier characterizing the standard risk-coverage tradeoff~\citep{elyaniv2010foundations} at $\alpha = 0.2$. Gate conformal traces a frontier consistently below the $\alpha$ line, reaching 66\% coverage at $\tau = 2.0$ with only 12\% mean violation. Regression conformal achieves lower coverage (58\%) at higher violation (15\%), and naive thresholding violates $\alpha$ at every operating point. Gate conformal dominates on both axes at all practical $\tau$ values ($\geq 0.5$).

At low $\tau$, regression conformal routes a tiny fraction (0.1--3\%) with massive violation (8--51\%), while gate conformal correctly abstains. This illustrates the value of the conformal abstention mechanism: routing nothing is preferable to routing with violated guarantees.

\begin{figure}[htbp]
  \centering
  \includegraphics[width=\columnwidth]{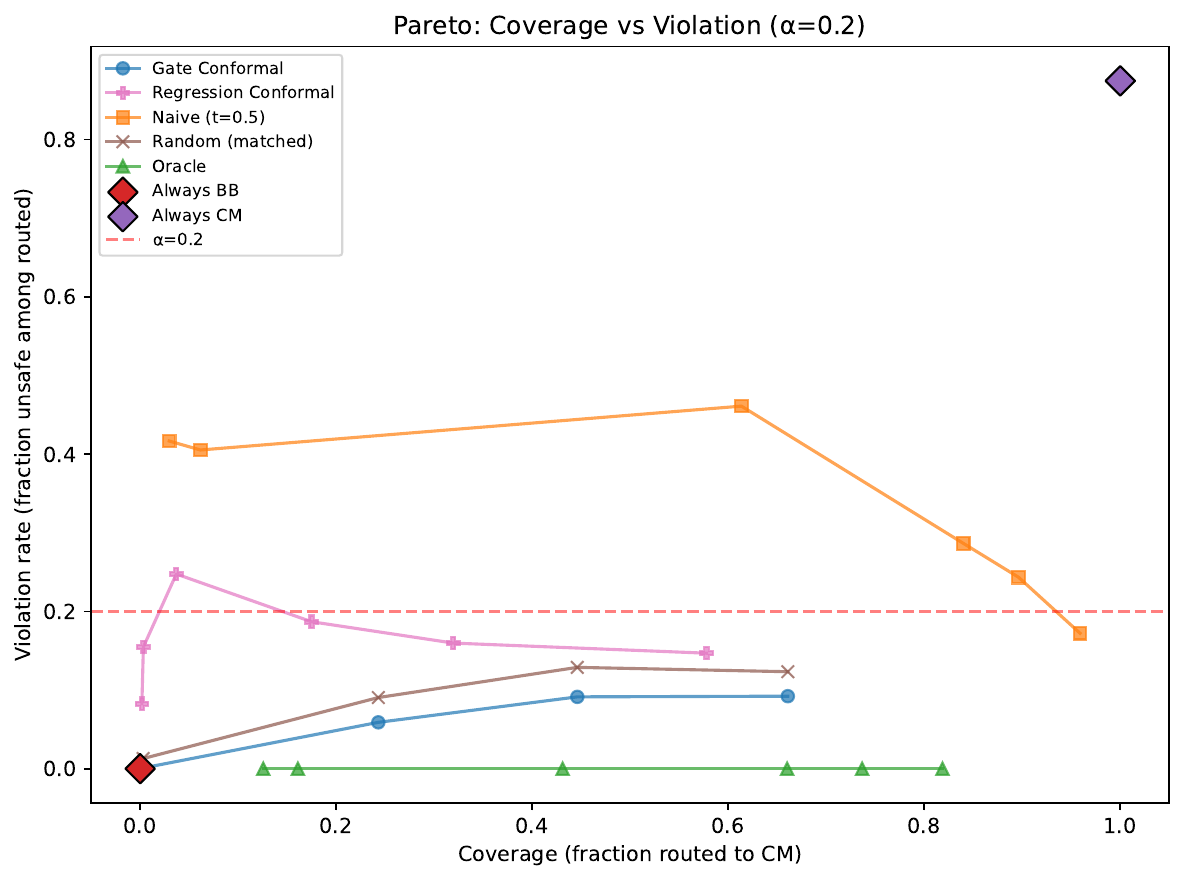}
  \caption{Coverage vs.\ violation rate at $\alpha = 0.2$. Each point corresponds to a $\tau$ value averaged over 35 datasets. Gate conformal remains below $\alpha$. Regression conformal and naive operate above it.}\label{fig:pareto}
\end{figure}

\subsection{Calibration--Efficiency Separation}

\begin{figure}[htbp]
  \centering
  \includegraphics[width=\columnwidth,height=0.45\textheight,keepaspectratio]{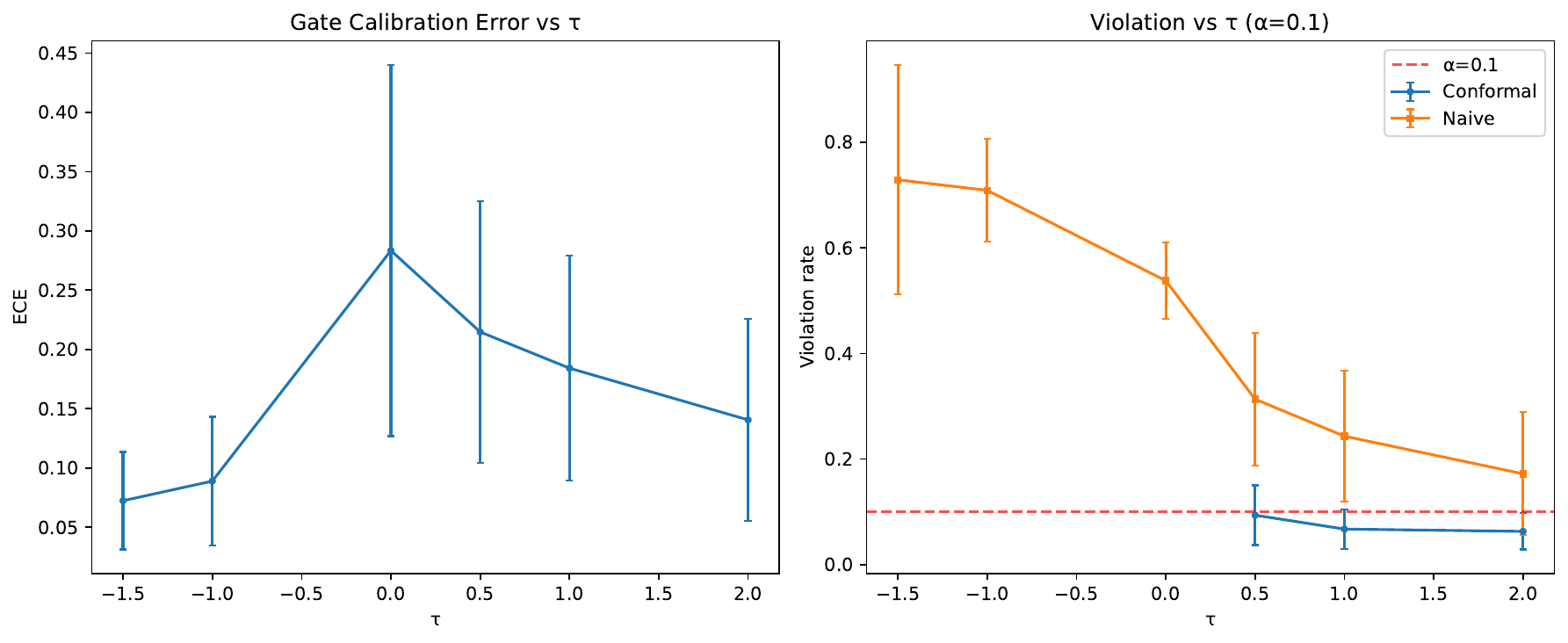}
  \caption{Left: gate ECE varies from 0.07 to 0.28 across $\tau$. Right: despite high ECE, conformal violation remains below $\alpha = 0.1$. Naive violations are 4--8$\times$ higher.}\label{fig:ece}
\end{figure}

Figure~\ref{fig:ece} demonstrates the calibration--efficiency separation empirically. The gate's ECE peaks at $\tau = 0$ (0.28) where class balance is near 50/50 and the gate's probability estimates are least reliable. Yet, conformal violation remains well below $\alpha = 0.1$ at all $\tau$. The primary cost of this initial miscalibration is threshold instability rather than strict invalidity. At $\tau = 0$, coverage is only 0.3\% despite 43\% of inputs being theoretically safe ($\pi = 0.43$).

As detailed in Appendix~\ref{app:recal}, applying post-hoc recalibration substantially reduces ECE but does not uniformly increase coverage. Instead, by preventing the conformal procedure from selecting extreme thresholds to compensate for skewed scores, recalibration stabilizes the threshold in a denser region of the data. This occasionally results in slightly lower coverage but yields a realized violation rate that is more stable and truer to the underlying data distribution. Ultimately, this indicates that while proper calibration stabilizes thresholds, the remaining efficiency gap is primarily due to ranking limitations (AUC $= 0.58$), not miscalibration.

\FloatBarrier
\subsection{Gate Conformal vs.\ Regression Conformal}

We next compare against the natural regression-based alternative: predict degradation
\(\,d(x)\,\) from input features and apply split conformal intervals, routing when the
upper bound lies below \(\tau\) (Section~\ref{sec:experiments}).
This baseline has no direct guarantee on conditional violation among routed inputs,
so we evaluate both coverage and violations of \(\alpha\) per dataset.

\begin{table}[H]
\caption{Per-dataset comparison at practical operating points. Gate conformal achieves higher coverage on more datasets while violating $\alpha$ on far fewer.}\label{tab:headtohead}

\centering\small
\begin{tabular}{@{}llccccc@{}}
\toprule
$\tau$ & $\alpha$ & \#DS & \multicolumn{2}{c}{Higher Cov.} & \multicolumn{2}{c}{Violates $\alpha$} \\
\cmidrule(lr){4-5}\cmidrule(lr){6-7}
& & & Gate & Reg & Gate & Reg \\
\midrule
0.5 & 0.2 & 23 & 10 & 11 & 2 & 15 \\
0.5 & 0.3 & 32 & 16 & 14 & 2 & 18 \\
1.0 & 0.2 & 29 & 18 & 7 & 4 & 11 \\
1.0 & 0.3 & 32 & 20 & 7 & 1 & 12 \\
2.0 & 0.2 & 32 & 18 & 6 & 2 & 7 \\
2.0 & 0.3 & 35 & 13 & 9 & 2 & 7 \\
\bottomrule
\end{tabular}
\vspace{-0.5cm}
\end{table}

Gate conformal achieves higher coverage on more datasets while violating $\alpha$
far less often than regression conformal (Table~\ref{tab:headtohead}).
At $\tau=1.0, \alpha=0.2$, the mean coverage gain is $+23.9\%$ 
($p < 0.001$, Wilcoxon signed-rank test~\citep{demsar2006statistical}).

\begin{remark}[Marginal vs.\ conditional risk control in routing]\label{rem:marginal}
Split conformal regression guarantees $\Pr(d(X) \leq \hat{d}(X) + \hat{q}) \geq 1 - \alpha$ marginally over all test inputs. Routing, however, conditions on the event $\{\hat{d}(X) + \hat{q} \leq \tau\}$, which selects inputs where predicted degradation is low. This conditioning can concentrate the routed subset on inputs where the regression model is overconfident. The resulting conditional violation rates can far exceed $\alpha$. This failure is structural, not a limitation of regressor capacity. Replacing ridge regression with a nonlinear \texttt{HistGradientBoostingRegressor} increases the violation rate from 56\% to 72\% of settings at $\alpha = 0.2$ (Table~\ref{tab:guarantee_full}). A more accurate regressor routes more aggressively into the biased subset, making the problem worse. The classification formulation avoids this issue by directly targeting the conditional violation rate on the routed set.
\end{remark}

\FloatBarrier
\subsection{Guarantee Reliability Across $\alpha$}

\begin{figure}[htbp]
  \centering
  \includegraphics[width=\columnwidth]{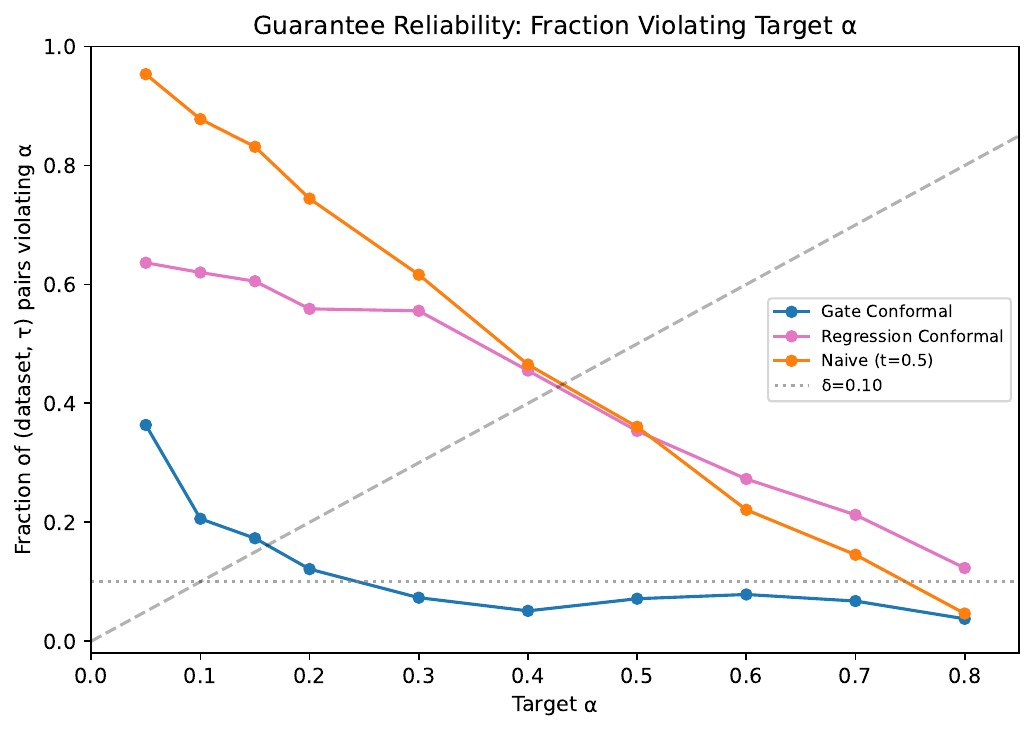}
  \caption{Fraction of (dataset, $\tau$) pairs violating target $\alpha$ across the full $\alpha$ range. Gate conformal remains near the $\delta = 0.10$ line. Baselines converge only at very permissive $\alpha$.}\label{fig:fraction}
\end{figure}

Figure~\ref{fig:fraction} shows guarantee reliability across the full $\alpha$ range. Gate conformal maintains violation rates near or below $\delta = 0.10$ for $\alpha \geq 0.15$, confirming the theoretical guarantee. At very strict $\alpha = 0.05$, the rate rises to 36\%. This occurs because few (dataset, $\tau$) pairs have enough safe points in the calibration set to verify at $\alpha = 0.05$ with $\delta = 0.10$, so the denominator consists primarily of borderline cases. Regression conformal violates at 55--63\% across most $\alpha$ values, and naive thresholding exceeds 60\% at $\alpha \leq 0.3$.

We analyze calibration set size requirements in detail in Appendix~\ref{app:calsize}, 
showing that $n \approx 300$ is comfortably in the regime where the bound is tight.

\FloatBarrier
\section{Discussion}\label{sec:discussion}

\paragraph{Practical implications.}
Our results suggest the following workflow for safe model routing. 
(1) Train a gate classifier. 
(2) Check feasibility via Proposition~\ref{prop:feasibility} using estimated $\pi$ and desired $\alpha$. 
(3) If feasible, run Algorithm~\ref{alg:routing} on a calibration set sized appropriately for the target risk level, for example ensuring $n \gg 1/\alpha$ to obtain non-vacuous bounds. 
(4) Deploy with formal guarantees.

Post-hoc recalibration, such as temperature scaling or beta calibration, is not required for theoretical validity but is recommended in practice. Maximum achievable coverage is primarily limited by ranking quality (AUC) rather than miscalibration. Recalibration does not substantially increase coverage, but by correcting over- or underconfident scores it prevents the conformal procedure from selecting extreme thresholds. This stabilizes finite-sample behavior and yields violation rates more closely aligned with the target $\alpha$ (Appendix~\ref{app:recal}).

The framework requires only a gate trained on input features and a calibration set with binary safety labels. These requirements are lightweight and model-agnostic, and extend naturally to domains such as LLM routing using human evaluation or automated scoring.

\paragraph{Computational cost.}
The gate is a logistic regression requiring a single dot product per input, or $O(d)$ operations where $d$ is the feature dimension. The expensive model is a random forest with 1500 trees, requiring $O(1500 \times \text{depth})$ operations per input. This is approximately three orders of magnitude more expensive than the gate. At 66\% coverage ($\tau = 2.0$), proactive routing avoids running the expensive model on two-thirds of inputs. The gate overhead is negligible compared to either model. Unlike cascading, which must run the surrogate on every input before deciding to escalate, proactive routing runs only the gate on all inputs.

\paragraph{When routing fails.}
At $\tau = -1.0$, only 16\% of inputs are safe ($\pi = 0.16$).  
With $\alpha = 0.2$, Proposition~\ref{prop:feasibility} requires
\[
  \frac{\mathrm{TPR}(t)}{\mathrm{FPR}(t)} \ge C(0.16,0.2)
  = \frac{0.84 \times 0.8}{0.16 \times 0.2}
  = 21.
\]
No empirical ROC point satisfies this, even with AUC $0.70$. The sufficient bound gives $\Phi_c = 1.0$, implying that near-perfect ranking would be required.

Accordingly, conformal returns $t^*=\infty$ and abstains.  
Regression conformal instead routes 0.3\% of inputs with 15.5\% mean violation, exceeding $\alpha$ on most datasets.

\paragraph{Classification vs.\ regression for routing.}
The dominance of classification over regression in our setting has a clear explanation: predicting whether $d(x) \leq \tau$ (a binary question) requires only finding a decision boundary, while predicting $d(x)$ (a continuous quantity) requires accurate function approximation everywhere. When input features are weakly predictive of degradation (as in our tabular setting), the classification formulation degrades gracefully (lower coverage) while the regression formulation leads to frequent violations.

\paragraph{Limitations.}
Several limitations bound our conclusions and suggest directions for future work.

\emph{Marginal vs.\ conditional guarantees.}
Our guarantee controls the average violation rate over random draws of the calibration set, not the violation rate conditional on specific input subregions.
Exact distribution-free conditional coverage is known to be impossible without structural assumptions~\citep{angelopoulos2023conformal}.
The routed set could therefore contain subgroups with higher-than-$\alpha$ violation rates even while the overall rate is controlled.
Extending to approximate conditional guarantees via group-conditional conformal methods is an important direction.

\emph{Tolerance selection.}
We report results across multiple degradation tolerances $\tau$ for analysis, running the conformal procedure independently for each $(\tau,\alpha)$ pair. 
In deployment, $\tau$ represents an application-dependent tolerance. 
If a practitioner wishes to select $\tau$ adaptively using calibration data, a separate tuning split or a Learn-Then-Test style procedure \citep{angelopoulos2021learn} can be used to preserve the nominal $\delta$ guarantee.

\emph{Exchangeability and distribution shift.}
The guarantee assumes calibration and test data are exchangeable.
Under covariate shift, where $P_{\text{test}}(X) \neq P_{\text{cal}}(X)$, the guarantee formally breaks.
The Clopper--Pearson bound is exact (finite-sample), so mild distribution shift may not immediately produce empirical violations, but the formal guarantee no longer applies.
Weighted conformal prediction~\citep{tibshirani2019conformal}, which reweights calibration points by likelihood ratios $dP_{\text{test}}/dP_{\text{cal}}$, could extend the framework to known shift. However, estimating these ratios introduces additional uncertainty.

Empirically, we observe that even under assumed exchangeability, finite-sample variability across different random splits can lead to realized violation rates slightly exceeding $\delta$ for strict $\alpha$ targets.

\emph{Input-space separability.}
Coverage is fundamentally bounded by how well input features $x$ separate safe from unsafe regions.
If degradation depends primarily on label noise or model stochasticity rather than input features, no gate can achieve high AUC, and coverage will be low regardless of calibration or conformal procedure.
This is a limitation of the routing problem itself, not our method: some model pairs on some data simply do not admit efficient input-based routing.

\section{Conclusion}\label{sec:conclusion}

We introduced proactive routing to interpretable surrogates with 
exact, distribution-free safety guarantees. A classification gate 
predicts input-level safety, and Clopper--Pearson conformal 
calibration selects a routing threshold that controls the violation 
rate on the routed subset with probability $1-\delta$. Our analysis 
characterizes routing feasibility through a likelihood-ratio condition 
and a sufficient critical AUC bound, clarifying when safe routing is 
achievable. Across 35 datasets and multiple model families, gate 
conformal routing achieves higher coverage with controlled violations 
compared to regression-based baselines. Future work includes 
extensions to hierarchical routing frameworks and robustness under 
distribution shift.

\bibliography{references}
\bibliographystyle{plainnat}

\clearpage
\appendix
\section{Proofs}\label{app:proofs}

\subsection{Proof of Proposition~\ref{prop:guarantee}}

The guarantee follows from the coverage property of Clopper--Pearson intervals combined with the fixed-sequence testing argument of Remark~\ref{rem:adaptivity}. For any fixed threshold $t$, let $p^* = V(t)$ denote the true violation rate. On the calibration set, $k(t) \sim \text{Binomial}(n(t), p^*)$. By the Clopper--Pearson construction~\citep{clopper1934use}:
\[
  \Pr\big(p^* \leq \text{UCB}_\delta(k(t), n(t))\big) \geq 1 - \delta.
\]
Algorithm~\ref{alg:routing} selects $t^*$ adaptively from $\{t_1, \ldots, t_m\}$. Conditioning on the gate scores $\{s_i\}_{i=1}^n$, the threshold ordering and routed index sets become deterministic. The labels $\{Y_i\}$ remain exchangeable conditional on the scores because the gate is trained on an independent split. Each conditional CP test is therefore exact. The fixed-sequence testing principle~\citep{marcus1976closed} ensures that accepting the first threshold satisfying $\text{UCB}_\delta \leq \alpha$ in a pre-specified order controls the family-wise error rate at level $\delta$. By iterated expectation, $\Pr(V(t^*) > \alpha) \leq \delta$ holds unconditionally.

This argument requires that the calibration set is independent of the gate training data and that calibration points are exchangeable with test points. Our 4-way split ensures both conditions. The Clopper--Pearson bound is exact and not asymptotic, so the guarantee holds for any sample size $n$.

\subsection{Proof of Theorem~\ref{thm:critical_auc}}

The ROC curve $\{(\text{FPR}(t), \text{TPR}(t)) : t \in \mathbb{R}\}$ satisfies $\text{AUC} = \int_0^1 \text{TPR}(u)\,du$ where $u = \text{FPR}$.

Suppose no point on the ROC satisfies~\eqref{eq:neyman}, i.e., $\text{TPR}(u) < C(\pi,\alpha) \cdot u$ for all $u \in (0, 1]$. Then:
\[
  \text{AUC} = \int_0^1 \text{TPR}(u)\,du < \int_0^1 C \cdot u\,du = \frac{C}{2}.
\]
Taking the contrapositive: if $\text{AUC} \geq C/2 = \Phi_c(\pi,\alpha)$, then there exists a point satisfying~\eqref{eq:neyman}. Since AUC $\leq 1$, we clip at 1.

Note this is a necessary condition for infeasibility, not sufficient for feasibility: $\text{AUC} \geq \Phi_c$ guarantees some point on the ROC has high enough likelihood ratio, but a non-concave ROC could still have that point at a degenerate threshold.

\subsection{Proof of Proposition~\ref{prop:feasibility}}

\begin{proof}
By Bayes' rule:
$V(t) = \frac{(1-\pi)\mathrm{FPR}(t)}{(1-\pi)\mathrm{FPR}(t) + \pi\,\mathrm{TPR}(t)}$.
Setting $V(t) \leq \alpha$:
$(1-\pi)\mathrm{FPR}(t) \leq \alpha[(1-\pi)\mathrm{FPR}(t) + \pi\,\mathrm{TPR}(t)]$,
which rearranges to $(1-\alpha)(1-\pi)\mathrm{FPR}(t) \leq \alpha\pi\,\mathrm{TPR}(t)$.
Dividing both sides by $\mathrm{FPR}(t)$ (assumed positive for non-trivial routing) gives~\eqref{eq:neyman}.
\end{proof}

\section{Regression Conformal Baseline Details}\label{app:regression}

The regression conformal baseline trains Ridge regression ($\alpha_{\text{Ridge}} = 1.0$) to predict $d(x)$ from input features. On the calibration set, it computes residuals $r_i = d_i - \hat{d}_i$. For each target $\alpha$, it finds the conformal quantile:
\[
  \hat{q} = \text{Quantile}\!\left(r_1, \ldots, r_n;\; \frac{\lceil(n+1)(1-\alpha)\rceil}{n}\right),
\]
and routes input $x$ to the surrogate iff $\hat{d}(x) + \hat{q} \leq \tau$.

This is a standard split conformal prediction interval~\citep{romano2019conformalized} applied to the degradation prediction problem. It guarantees $\Pr(d(X) \leq \hat{d}(X) + \hat{q}) \geq 1 - \alpha$ marginally, but the routing decision $\hat{d}(x) + \hat{q} \leq \tau$ provides no direct guarantee on the violation rate among routed inputs. This is because the conformal guarantee is marginal over all test points, not conditional on the routed subset.

\subsection{Proof of Theorem~\ref{thm:tight_auc}}\label{app:tight_auc}

\begin{lemma}[Concave secant property]\label{lem:secant}
For a concave function $f:[0,1]\to[0,1]$ with $f(0)=0$, 
the secant $f(u)/u$ is non-increasing on $(0,1]$.
\end{lemma}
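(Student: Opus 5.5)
Write $\varphi(u)=f(u)/u$ for $u\in(0,1]$; the claim is that $\varphi$ is non-increasing on $(0,1]$. The plan is to use only the definition of concavity together with the anchor $f(0)=0$. No differentiability is assumed, so I will not differentiate $\varphi$; a chord argument covers the general concave case.

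Fix $0<u<v\le 1$ and write $u$ as a convex combination of the endpoints $0$ and $v$:
\[
u=\lambda v+(1-\lambda)\cdot 0, \qquad \lambda=\frac{u}{v}\in(0,1).
\]
Concavity of $f$ on $[0,1]$ then gives
\[
f(u)\ge \lambda f(v)+(1-\lambda)f(0)=\frac{u}{v}\,f(v),
\]
where the last equality uses $f(0)=0$. Dividing by $u>0$ yields $f(u)/u\ge f(v)/v$, i.e. $\varphi(u)\ge\varphi(v)$. This is exactly the non-increasing property. For the case $v=1$ the same computation applies verbatim, since $1$ lies in the domain.

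The only point that needs care is that concavity is applied at the endpoint $0$ itself. This is legitimate because concavity on the closed interval $[0,1]$ is assumed and $f(0)=0$ is part of the hypothesis. If one only had concavity on $(0,1]$ plus continuity at $0$, I would instead apply the inequality on $[\epsilon,v]$ and let $\epsilon\to 0$. The range restriction $f\in[0,1]$ plays no role here.

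For the later use in Theorem~\ref{thm:tight_auc}, the consequence is the following. If the ROC curve, read as $\mathrm{TPR}$ as a function of $\mathrm{FPR}=u$, is concave, then the likelihood-ratio condition~\eqref{eq:neyman} $\mathrm{TPR}(u)\ge Cu$ holds on an initial segment $(0,u_0]$ whenever it holds at any point. This justifies reducing feasibility to the behaviour of the ROC curve near the origin.
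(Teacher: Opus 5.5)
Your proof is correct and matches the paper's argument: both write $u=\tfrac{u}{v}\,v+\bigl(1-\tfrac{u}{v}\bigr)\cdot 0$, use concavity together with $f(0)=0$ to get $f(u)\ge \tfrac{u}{v}f(v)$, and divide by $u>0$. Your side remarks (the range restriction to $[0,1]$ is unused, and the condition $\mathrm{TPR}(u)\ge Cu$ propagates to an initial segment) are accurate but not needed for the lemma itself.
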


\begin{proof}
For $0 < a < b \leq 1$, concavity gives 
$f(a) \geq \tfrac{a}{b}f(b) + (1 - \tfrac{a}{b})f(0) = \tfrac{a}{b}f(b)$,
so $f(a)/a \geq f(b)/b$.
\end{proof}

\begin{proof}[Proof of Theorem~\ref{thm:tight_auc}]
\textbf{Case $C \leq 1$:} At $u=1$: $\mathrm{ROC}(1)/1 = 1 \geq C$.
Feasibility holds trivially (routing all inputs satisfies $V \leq \alpha$).

\textbf{Case $C > 1$:}

\emph{(i) Sufficiency.} By Lemma~\ref{lem:secant}, 
$\sup_{u \in (0,1]} \mathrm{ROC}(u)/u = \mathrm{ROC}'(0^+) \triangleq L$.
Infeasibility means $L < C$. 
For a concave ROC with initial slope $L$, 
the curve lies below $\min(Lu, 1)$, so
\[
\mathrm{AUC} \leq \int_0^{1/L} Lu\,du + \int_{1/L}^1 1\,du 
= \frac{1}{2L} + 1 - \frac{1}{L} = 1 - \frac{1}{2L}.
\]
Since $L < C$: $\mathrm{AUC} < 1 - 1/(2C) = \Phi_c^*$.
Contrapositive: $\mathrm{AUC} \geq \Phi_c^*$ implies feasibility.

\emph{(ii) Tightness.} For any $\varepsilon > 0$, choose 
$L = C - \eta$ with $\eta > 0$ small enough that 
$1 - 1/(2L) > \Phi_c^* - \varepsilon$. 
The concave ROC $\mathrm{ROC}(u) = \min(Lu, 1)$ has 
$\mathrm{ROC}'(0^+) = L < C$ (infeasible) and 
$\mathrm{AUC} = 1 - 1/(2L) \to \Phi_c^*$ as $\eta \to 0$.

\emph{(iii) Comparison.} For $C > 1$:
$\Phi_c^* = 1 - \frac{1}{2C} < \frac{C}{2} = \Phi_c$
iff $2C - 1 < C^2$ iff $(C-1)^2 > 0$, 
which holds for all $C \neq 1$.
\end{proof}

\subsection{Proof of Theorem~\ref{thm:coverage}}
\label{app:coverage_proof}

\begin{proof}
Let $\mathrm{ROC}:[0,1]\to[0,1]$ denote the (achievable) concave ROC curve of the gate,
where $u=\mathrm{FPR}$ and $v=\mathrm{TPR}$, and suppose $\mathrm{AUC}=A\ge \Phi_c^*$.
Let $C=C(\pi,\alpha)>1$.
By Proposition~\ref{prop:feasibility}, feasibility requires an operating point $(u,v)$ with
$v/u \ge C$. For any threshold achieving $v=Cu$, the routed coverage is
\[
\mathrm{Cov}(u)= (1-\pi)u+\pi v \;=\; \bigl((1-\pi)+\pi C\bigr)u.
\]
Using $C=(1-\pi)(1-\alpha)/(\pi\alpha)$, we have
\[
(1-\pi)+\pi C \;=\; (1-\pi)+\frac{(1-\pi)(1-\alpha)}{\alpha}
\;=\; \frac{1-\pi}{\alpha},
\]
so $\mathrm{Cov}(u)=\frac{1-\pi}{\alpha}\,u$ on the feasibility boundary $v=Cu$.

Define
\[
u_{\max} \;:=\; \sup\{u\in[0,1]: \mathrm{ROC}(u)\ge Cu\}.
\]
Concavity implies that the secant $\mathrm{ROC}(u)/u$ is non-increasing in $u$,
hence the feasible set is an interval $[0,u_{\max}]$ and $\mathrm{ROC}(u_{\max})=Cu_{\max}$
(when $u_{\max}<1$). It suffices to lower bound $u_{\max}$ in terms of $A$ and $C$.

Consider the piecewise-linear ROC that is linear from $(0,0)$ to $(u_{\max},Cu_{\max})$
and then linear from $(u_{\max},Cu_{\max})$ to $(1,1)$. This curve is concave and,
among concave ROC curves with the same $u_{\max}$ and boundary point $(u_{\max},Cu_{\max})$,
it minimizes area (placing as much mass as possible on the chord). Therefore, any concave ROC
with the same $u_{\max}$ has $\mathrm{AUC}$ at least that of this two-segment curve.

A direct geometric calculation for the two-segment curve gives
\[
\mathrm{AUC}
= \frac{1 + (C-1)u_{\max}}{2}.
\]
Thus, if $\mathrm{AUC}=A$, then necessarily
\[
A \;\ge\; \frac{1 + (C-1)u_{\max}}{2}
\quad\Longrightarrow\quad
u_{\max} \;\ge\; \frac{2A-1}{C-1}.
\]
Since $u_{\max}\le 1$, we obtain
\[
u_{\max} \;\ge\; \min\Bigl\{1,\;\frac{2A-1}{C-1}\Bigr\}.
\]
Finally, choosing the threshold at $u=u_{\max}$ on the boundary yields
\begin{align*}
\mathrm{Cov}^*
&\ge \frac{1-\pi}{\alpha}\,u_{\max} \\
&\ge \min\Bigl\{1,\;
\frac{(2A-1)(1-\pi)}{(C-1)\alpha}
\Bigr\}.
\end{align*}
which proves the claim.

\emph{Tightness.} Equality is achieved by the two-segment concave ROC constructed above,
with breakpoint $(u_{\max},Cu_{\max})$ and $u_{\max}=\min\{1,(2A-1)/(C-1)\}$.
\end{proof}

\section{Full Results}\label{app:full}

\begin{table}[H]
\caption{Gate conformal vs.\ regression conformal: mean coverage and violation across 35 datasets at representative operating points.}\label{tab:full}
\centering\small
\begin{tabular}{@{}rrcccc@{}}
\toprule
$\tau$ & $\alpha$ & Gate Cov. & Gate Viol. & Reg Cov. & Reg Viol. \\
\midrule
0.5 & 0.10 & 0.096 & 0.021 & 0.073 & 0.098 \\
0.5 & 0.20 & 0.243 & 0.059 & 0.175 & 0.187 \\
0.5 & 0.30 & 0.431 & 0.134 & 0.335 & 0.320 \\
1.0 & 0.10 & 0.132 & 0.015 & 0.122 & 0.064 \\
1.0 & 0.20 & 0.446 & 0.091 & 0.320 & 0.160 \\
1.0 & 0.30 & 0.638 & 0.145 & 0.550 & 0.253 \\
2.0 & 0.10 & 0.351 & 0.032 & 0.269 & 0.079 \\
2.0 & 0.20 & 0.661 & 0.092 & 0.578 & 0.147 \\
2.0 & 0.30 & 0.847 & 0.134 & 0.754 & 0.175 \\
\bottomrule
\end{tabular}
\end{table}

\begin{table}[H]
\caption{Significance tests (Wilcoxon signed-rank, two-sided). To evaluate operational tradeoffs, tests are conditioned on problem instances where both methods successfully found a valid routing threshold (coverage > 0). Under these conditions, Gate Conformal achieves significantly lower violation rates than the naive baseline and significantly higher coverage than regression conformal.}\label{tab:significance}
\centering\small
\begin{tabular}{@{}llrrr@{}}
\toprule
Comparison & $\tau$, $\alpha$ & Diff. & $p$-value & $n$ \\
\midrule
\multicolumn{5}{l}{\emph{Gate vs.\ Naive (violation rate):}} \\
& 0.5, 0.2 & $-$0.076 & 0.007 & 14 \\
& 1.0, 0.2 & $-$0.060 & 0.002 & 23 \\
& 2.0, 0.2 & $-$0.025 & 0.009 & 28 \\
\midrule
\multicolumn{5}{l}{\emph{Gate vs.\ Reg.\ Conformal (coverage):}} \\
& 0.5, 0.2 & $+$0.256 & 0.005 & 13 \\
& 1.0, 0.2 & $+$0.239 & $<$0.001 & 20 \\
& 2.0, 0.2 & $+$0.139 & $<$0.001 & 26 \\
\bottomrule
\end{tabular}
\end{table}

\section*{D Additional Model Pair: XGBoost vs. Decision Tree}

To verify that our conclusions are not specific to the
random forest / decision tree pairing used in the main
text, we repeated the full evaluation using an XGBoost
model as the expensive predictor and a decision tree as
the surrogate on the same 35 OpenML regression datasets.
All experimental settings (data splits, calibration size,
risk levels $\alpha$, and tolerance levels $\tau$) were
kept identical.

\subsection*{D.0 Model Details (XGBoost)}

For the additional robustness experiment, the expensive
model $f$ was implemented as an XGBoost regressor
with the following hyperparameters:

\begin{itemize}
\item n\_estimators = 1500
\item learning\_rate = 0.03
\item max\_depth = 6
\item subsample = 0.9
\item colsample\_bytree = 0.9
\item reg\_lambda = 1.0
\item random\_state = fixed
\item n\_jobs = -1
\end{itemize}

All other experimental settings (splits, calibration
procedure, surrogate depth selection, and risk levels)
were identical to the main experiment.

\subsection*{D.1 Gate Properties}

Table~\ref{tab:xgb_gate_properties} reports gate
statistics across tolerance levels.

\begin{table}[H]
\centering
\caption{Gate properties for XGBoost vs. Decision Tree across $\tau$.
AUC is reported as median across datasets.}
\label{tab:xgb_gate_properties}
\begin{tabular}{c c c c}
\toprule
$\tau$ & $\pi$ & AUC (median) & ECE \\
\midrule
-1.5 & 0.135 & 0.50 & 0.137 \\
-1.0 & 0.173 & 0.50 & 0.170 \\
0.0  & 0.459 & 0.51 & 0.302 \\
0.5  & 0.639 & 0.58 & 0.213 \\
1.0  & 0.717 & 0.56 & 0.160 \\
2.0  & 0.799 & 0.57 & 0.136 \\
\bottomrule
\end{tabular}
\end{table}

The same structural patterns observed in the main
paper persist: (i) feasibility is primarily driven by the
base safe rate $\pi$, and (ii) moderate AUC values
($\approx 0.55$--$0.60$) suffice when $\pi$ is large.

\subsection*{D.2 Guarantee Reliability}

We evaluate the fraction of $(\text{dataset}, \tau)$
pairs violating the target $\alpha$. Results are
consistent with the finite-sample guarantee:

\begin{itemize}
\item At $\alpha = 0.10$: 14.3\% violation rate.
\item At $\alpha = 0.20$: 12.1\% violation rate.
\item At $\alpha = 0.30$: 14.5\% violation rate.
\item At $\alpha = 0.50$: 5.1\% violation rate.
\end{itemize}

For $\alpha \ge 0.20$, violation rates remain close to
the target confidence level $\delta = 0.10$, matching
the behavior observed in the random forest setting.

Regression conformal violates $\alpha$ at substantially
higher rates (31\%--37\% for $\alpha \in \{0.10, 0.20, 0.30\}$),
confirming that the superiority of the classification-based
gate formulation is not model-pair specific.

\subsection*{D.3 Coverage–Violation Tradeoffs}

Across practical operating points ($\tau \ge 0.5$),
gate conformal consistently achieves higher coverage
than regression conformal while maintaining comparable
or lower violation rates.

For example:

\begin{itemize}
\item $\tau = 1.0$, $\alpha = 0.2$:
Gate coverage = 0.419 vs. 0.310 (regression),
with similar violation rates.
\item $\tau = 2.0$, $\alpha = 0.2$:
Gate coverage = 0.619 vs. 0.543.
\end{itemize}

Per-dataset comparisons show that gate
conformal achieves higher coverage on a majority of
datasets at $\tau \in \{1.0, 2.0\}$ and $\alpha \in \{0.2, 0.3\}$.

Wilcoxon signed-rank tests confirm statistically
significant coverage improvements at multiple
operating points (e.g., $\tau = 1.0$, $\alpha = 0.2$,
$p = 0.0033$).

\subsection*{D.4 Summary}

The XGBoost results are consistent with the main findings:

\begin{enumerate}
\item The conformal guarantee holds across $\alpha$.
\item Feasibility depends primarily on $\pi$,
not absolute AUC.
\item Classification-based routing dominates
regression conformal in coverage.
\item When routing is infeasible, the method abstains
rather than violating safety.
\end{enumerate}

These results demonstrate that the proposed framework
is model-agnostic and not tied to a specific black-box
architecture.

\section*{E Additional Model Pair: MLP vs. Decision Tree}

To further assess robustness across model families, we
repeated all experiments using a multilayer perceptron
(MLP) as the expensive model and a decision tree as the
surrogate on the same 35 OpenML regression datasets.

\subsection*{E.1 Model Details}

The expensive model $f$ was implemented using a
multilayer perceptron with the following architecture:

\begin{itemize}
\item Hidden layers: (256, 128, 64)
\item Activation: ReLU
\item Optimizer: Adam
\item Weight decay: $\alpha = 10^{-4}$
\item Learning rate: $10^{-3}$
\item Max iterations: 1000
\item Early stopping enabled (validation fraction 0.1)
\item Random seed fixed
\end{itemize}

All preprocessing, data splits, calibration size,
risk levels $\alpha$, and tolerance values $\tau$
were identical to the main experiment.

\subsection*{E.2 Gate Properties}

Table~\ref{tab:mlp_gate_properties} summarizes
gate statistics across tolerance levels.

\begin{table}[H]
\centering
\caption{Gate properties for MLP vs. Decision Tree.}
\label{tab:mlp_gate_properties}
\begin{tabular}{c c c c}
\toprule
$\tau$ & $\pi$ & AUC (median) & ECE \\
\midrule
-1.5 & 0.300 & 0.688 & 0.122 \\
-1.0 & 0.368 & 0.627 & 0.154 \\
0.0  & 0.629 & 0.544 & 0.221 \\
0.5  & 0.723 & 0.570 & 0.194 \\
1.0  & 0.770 & 0.557 & 0.153 \\
2.0  & 0.830 & 0.535 & 0.112 \\
\bottomrule
\end{tabular}
\end{table}

As in the main paper, feasibility is primarily governed by
the safe base rate $\pi$. Even moderate AUC values
($\approx 0.55$--$0.60$) suffice when $\pi$ is large,
consistent with Proposition~2 and Theorem~1.

\subsection*{E.3 Guarantee Reliability}

The conformal guarantee remains well-calibrated across
risk levels:

\begin{itemize}
\item $\alpha = 0.10$: 10.3\% violation rate.
\item $\alpha = 0.20$: 16.5\% violation rate.
\item $\alpha = 0.30$: 13.6\% violation rate.
\item $\alpha = 0.50$: 5.5\% violation rate.
\end{itemize}

These rates remain close to the nominal confidence
parameter $\delta = 0.10$ for $\alpha \ge 0.20$,
mirroring behavior observed with RF and XGBoost.

In contrast, regression conformal violates $\alpha$
at substantially higher rates (50\%--71\% for
$\alpha \in \{0.10, 0.20, 0.30\}$).

\subsection*{E.4 Coverage–Violation Tradeoffs}

Across practical operating points ($\tau \ge 0.5$),
gate conformal achieves substantially higher coverage
than regression conformal while maintaining lower or
comparable violation rates.

For example:

\begin{itemize}
\item $\tau = 1.0$, $\alpha = 0.2$:
Gate coverage = 0.471 vs. 0.342 (regression),
with lower violation.
\item $\tau = 2.0$, $\alpha = 0.2$:
Gate coverage = 0.700 vs. 0.532.
\end{itemize}

Per-dataset comparisons show that gate conformal
wins on a majority of datasets at all practical
operating points. For instance:

\begin{itemize}
\item $\tau = 1.0$, $\alpha = 0.2$:
Gate wins 18 vs. 7 datasets on coverage.
\item $\tau = 2.0$, $\alpha = 0.2$:
Gate wins 22 vs. 4 datasets.
\end{itemize}

Wilcoxon signed-rank tests confirm statistically
significant coverage improvements across nearly all
practical regimes (e.g., $\tau = 1.0$, $\alpha = 0.2$,
$p = 5\times 10^{-4}$).

\subsection*{E.5 Summary}

The MLP results replicate and strengthen all main findings:

\begin{enumerate}
\item The conformal guarantee remains reliable.
\item Feasibility depends primarily on $\pi$ rather than absolute AUC.
\item Classification-based routing substantially outperforms
regression conformal in coverage.
\item When routing is infeasible, the method abstains rather
than violating safety.
\end{enumerate}

These experiments indicate that the framework is robust across tree-based, boosting-based, and neural
network black-box models.

\section{Calibration Set Size Requirements}\label{app:calsize}

The Clopper--Pearson bound tightens rapidly with calibration set size. 
Table~\ref{tab:calsize} shows the upper confidence bound 
$\mathrm{UCB}_{0.10}(k,n)$ for representative $(k,n)$ combinations. 
With $n=300$ (our setup: 15\% of 2000 configurations), observing $k=0$ 
unsafe among routed calibration points yields $\mathrm{UCB}=0.008$. 
Even with $k=10$ unsafe out of $n=100$ routed points, $\mathrm{UCB}=0.150$, 
which remains below $\alpha=0.2$.

\begin{table}[H]
\caption{Clopper--Pearson upper bounds $\mathrm{UCB}_{0.10}(k, n)$ for selected calibration set sizes. With $n \approx 300$, the bound is sufficiently tight for practical risk control.}\label{tab:calsize}
\centering\small
\begin{tabular}{@{}rccccc@{}}
\toprule
& \multicolumn{5}{c}{Calibration points routed ($n$)} \\
\cmidrule(lr){2-6}
$k$ unsafe & 50 & 100 & 200 & 300 & 500 \\
\midrule
0  & 0.045 & 0.023 & 0.011 & 0.008 & 0.005 \\
1  & 0.076 & 0.038 & 0.019 & 0.013 & 0.008 \\
5  & 0.178 & 0.091 & 0.046 & 0.031 & 0.018 \\
10 & 0.291 & 0.150 & 0.076 & 0.051 & 0.031 \\
20 & ---   & 0.261 & 0.133 & 0.089 & 0.054 \\
\bottomrule
\end{tabular}
\end{table}

The minimum calibration size required to guarantee $V(t^*) \leq \alpha$ 
when the true violation rate is zero satisfies 
$n \geq \lceil \log(\delta)/\log(1-\alpha) \rceil$. 
For $\alpha=0.2$, $\delta=0.1$: $n \geq 11$; 
for $\alpha=0.05$, $\delta=0.1$: $n \geq 45$. 
In practice, larger $n$ is needed because some routed calibration points 
will be unsafe ($k>0$), but these bounds show that even modest calibration 
sets provide meaningful guarantees. Our $n \approx 300$ is comfortably in 
the regime where the bound is tight.

\section{Effect of Post-Hoc Recalibration}\label{app:recal}

To test whether improving gate calibration recovers coverage lost to miscalibration, we applied three post-hoc recalibration methods to the Platt-scaled gate scores: beta calibration~\citep{kull2017beta}, temperature scaling (logistic recalibration)~\citep{guo2017calibration}, and isotonic regression. Each recalibrator was fit on a held-out validation set and then applied to calibration and test scores prior to conformal threshold selection. Because recalibration was learned independently of the calibration set, the exchangeability between calibration and test data required for the conformal guarantee is preserved.

\begin{table}[h]
\caption{Effect of post-hoc recalibration at $\alpha = 0.2$, averaged over 35 datasets (RF vs.\ decision tree). Recalibration substantially reduces ECE but does not increase coverage, indicating the efficiency gap is dominated by ranking limitations.}\label{tab:recal}
\centering\small
\begin{tabular}{@{}llccc@{}}
\toprule
$\tau$ & Recalibration & ECE & Coverage & Violation \\
\midrule
0.5 & None (Platt) & 0.215 & 0.243 & 0.059 \\
0.5 & Beta          & 0.066 & 0.243 & 0.059 \\
0.5 & Temperature   & 0.077 & 0.243 & 0.059 \\
0.5 & Isotonic      & 0.050 & 0.216 & 0.043 \\
\midrule
1.0 & None (Platt) & 0.184 & 0.446 & 0.091 \\
1.0 & Beta          & 0.060 & 0.421 & 0.085 \\
1.0 & Temperature   & 0.071 & 0.421 & 0.085 \\
1.0 & Isotonic      & 0.048 & 0.398 & 0.072 \\
\midrule
2.0 & None (Platt) & 0.141 & 0.661 & 0.092 \\
2.0 & Beta          & 0.052 & 0.651 & 0.095 \\
2.0 & Temperature   & 0.061 & 0.644 & 0.091 \\
2.0 & Isotonic      & 0.036 & 0.622 & 0.077 \\
\bottomrule
\end{tabular}
\end{table}

Table~\ref{tab:recal} shows that beta calibration reduces ECE by $3\times$ (e.g., 0.184 to 0.060 at $\tau = 1.0$) while coverage decreases only slightly ($-2.5$pp for beta/temperature, $-4.8$pp for isotonic). Violation rates also decrease marginally, and the conformal guarantee continues to hold: the fraction of (dataset, $\tau$) pairs violating $\alpha$ remains near $\delta = 0.10$ for all recalibration methods.

The small coverage decrease (rather than increase) indicates that the Platt-scaled gate was mildly overconfident on borderline-unsafe inputs. Recalibration corrects these scores downward, removing a small number of inputs that should not have been routed. The dominant source of the efficiency gap between our gate (42--45\% coverage at $\tau = 1.0$) and the oracle (74\%) is ranking quality (AUC $= 0.58$), not miscalibration. Improving coverage substantially would require a stronger gate, not better calibration.

\FloatBarrier
\subsection{Stronger Regression Baseline}
\label{app:hgbr}

To assess whether regression-conformal underperforms due to limited model capacity, we replaced ridge regression with \texttt{HistGradientBoostingRegressor}(HGBR) while keeping the conformal procedure unchanged. Results were qualitatively similar, with no improvement in violation control or coverage. This suggests that the observed gap reflects the formulation (binary risk control vs.\ magnitude regression) rather than regressor expressiveness. 

Table~\ref{tab:guarantee_full} reports the empirical frequency of exceeding the target $\alpha$ across $(\text{dataset}, \tau)$ pairs. As expected, gate conformal remains broadly consistent with the nominal $\delta=0.10$ bound, while baselines substantially exceed the target rate. Note that the conformal guarantee bounds the probability (over calibration randomness) that the routed-set violation exceeds $\alpha$; aggregation across $(\text{dataset}, \tau)$ pairs may therefore exceed $\alpha$ in a fraction of cases bounded by $\delta$.

\begin{table}[t]
\caption{Guarantee check across $(\text{dataset}, \tau)$ pairs when regression-conformal uses \texttt{HistGradientBoostingRegressor}. For each target $\alpha$, we report the fraction of pairs whose realized violation rate exceeds $\alpha$, along with mean violation rate and mean coverage.}
\label{tab:guarantee_full}
\centering\small
\begin{tabular}{@{}lcccc@{}}
\toprule
$\alpha$ & Method & Violations & Mean Viol. & Mean Cov. \\
\midrule
\multirow{3}{*}{0.10}
& Gate Conf. & 7/34 (21\%)  & 0.071 & 0.595 \\
& HGBR\ Conf. & 93/119 (78\%) & 0.385 & 0.164 \\
& Naive 0.5  & 151/172 (88\%) & 0.404 & 0.692 \\
\midrule
\multirow{3}{*}{0.20}
& Gate Conf. & 8/66 (12\%)  & 0.129 & 0.717 \\
& HGBR\ Conf. & 128/177 (72\%) & 0.406 & 0.249 \\
& Naive 0.5  & 128/172 (74\%) & 0.404 & 0.692 \\
\midrule
\multirow{3}{*}{0.30}
& Gate Conf. & 6/82 (7\%)   & 0.181 & 0.826 \\
& HGBR\ Conf. & 132/196 (67\%) & 0.426 & 0.351 \\
& Naive 0.5  & 106/172 (62\%) & 0.404 & 0.692 \\
\midrule
\multirow{3}{*}{0.50}
& Gate Conf. & 8/112 (7\%)  & 0.278 & 0.917 \\
& HGBR\ Conf. & 90/202 (45\%) & 0.424 & 0.547 \\
& Naive 0.5  & 62/172 (36\%) & 0.404 & 0.692 \\
\bottomrule
\end{tabular}
\end{table}

\FloatBarrier
\section{GENERAL-PURPOSE OPENML REGRESSION DATASETS}
\label{app:openml-general}

To verify that our conclusions extend beyond hyperparameter configuration data, we replicate the evaluation on 22 general-purpose OpenML regression datasets spanning diverse domains: air quality~(no2, pm10), social/labor~(strikes), energy efficiency, medical~(tumor), chemistry/biology~(QSAR\_Bioconcentration, physiochemical\_protein), aerospace~(Ailerons, NASA\_PHM2008), control systems~(pol), business~(Job\_Profitability), real estate~(miami\_housing, california\_houses, house\_sales, house\_16H), transport~(Bike\_Sharing\_Demand), physics~(superconduct), energy IoT~(appliances\_energy\_prediction), robotics~(sarcos), media~(OnlineNewsPopularity), and classic benchmarks~(2dplanes, fried). Dataset sizes range from 500 to 45{,}000 rows with 7 to 82 features.

Unlike the LCBench experiments where inputs are hyperparameter configurations, inputs here are natural domain variables (e.g., square footage, temperature, protein properties). The expensive model is a random forest with 1500 trees and the surrogate is a decision tree, with all other settings identical to the main experiments (Section~6.1).

\paragraph{Gate properties.} Table~\ref{tab:openml-gate} reports gate statistics across tolerance levels. Gate AUC is notably lower ($\approx 0.52$) than on LCBench ($\approx 0.58$), reflecting that degradation is harder to predict from natural features than from hyperparameter configurations. Despite this, the structural pattern persists: feasibility is driven by the base safe rate $\pi$, and coverage increases with $\tau$ even under weak gate discrimination.

\begin{table}[h]
\caption{Gate properties and conformal routing performance across tolerance levels on 22 general-purpose OpenML datasets. Coverage and violation at $\alpha = 0.2$ averaged over datasets with positive coverage.}
\label{tab:openml-gate}
\centering
\small
\begin{tabular}{ccccccc}
\toprule
& & AUC & & & \multicolumn{2}{c}{Conf. ($\alpha{=}0.2$)} \\
\cmidrule(lr){6-7}
$\tau$ & $\pi$ & (med.) & ECE & & Cov. & Viol. \\
\midrule
$-1.5$ & 0.18 & 0.53 & 0.06 & & 0.000 & --- \\
$-1.0$ & 0.19 & 0.52 & 0.05 & & 0.000 & --- \\
$0.0$  & 0.43 & 0.52 & 0.16 & & 0.048 & 0.20 \\
$0.5$  & 0.69 & 0.52 & 0.14 & & 0.452 & 0.05 \\
$1.0$  & 0.73 & 0.51 & 0.11 & & 0.462 & 0.04 \\
$2.0$  & 0.77 & 0.52 & 0.11 & & 0.557 & 0.04 \\
\bottomrule
\end{tabular}
\end{table}

\paragraph{Guarantee reliability.} Table~\ref{tab:openml-guarantee} reports the fraction of (dataset, $\tau$) pairs violating the target $\alpha$. Gate conformal achieves strong guarantee reliability: at $\alpha = 0.1$, zero violations are observed, and at $\alpha \geq 0.2$, violation rates remain well below $\delta = 0.10$. This conservative behavior is expected: the low AUC forces the conformal procedure to select high thresholds, routing fewer but safer inputs. In contrast, regression conformal and naive thresholding violate $\alpha$ at substantially higher rates (32--71\%), consistent with the main experiments.

\begin{table}[h]
\caption{Fraction of (dataset, $\tau$) pairs where the realized violation rate exceeds target $\alpha$ on general-purpose OpenML datasets. Gate conformal maintains strong safety control. Baselines violate substantially more often.}
\label{tab:openml-guarantee}
\centering
\small
\begin{tabular}{lccc}
\toprule
$\alpha$ & Gate Conf. & Reg. Conf. & Naive \\
\midrule
0.10 & 0/35~~(0.0\%) & 20/62~~(32.3\%) & 64/91~~(70.3\%) \\
0.20 & 1/38~~(2.6\%) & 48/88~~(54.5\%) & 58/91~~(63.7\%) \\
0.30 & 2/43~~(4.7\%) & 50/105~(47.6\%) & 53/91~~(58.2\%) \\
0.50 & 4/66~~(6.1\%) & 50/112~(44.6\%) & 39/91~~(42.9\%) \\
\bottomrule
\end{tabular}
\end{table}

\paragraph{Summary.} The general-purpose OpenML results replicate all main findings:
(1)~the conformal guarantee holds across $\alpha$ values, with even stronger reliability than on LCBench;
(2)~feasibility is governed by $\pi$ rather than absolute AUC;
(3)~gate conformal dominates regression conformal in guarantee reliability;
(4)~when gate discrimination is weak (AUC $\approx 0.52$), the framework reduces coverage rather than violating safety
These results confirm that the framework generalizes beyond hyperparameter configuration data to diverse real-world regression tasks where inputs are natural domain variables.

\end{document}